\documentclass[a4paper,UKenglish,cleveref, autoref, thm-restate]{lipics-v2021}
%This is a template for producing LIPIcs articles. 
%See  for further information.
%for A4 paper format use option "a4paper", for US-letter use option "letterpaper"
%for british hyphenation rules use option "UKenglish", for american hyphenation rules use option "USenglish"
%for section-numbered lemmas etc., use "numberwithinsect"
%for enabling cleveref support, use "cleveref"
%for enabling autoref support, use "autoref"
%for anonymousing the authors (e.g. for double-blind review), add "anonymous"
%for enabling thm-restate support, use "thm-restate"
%for enabling a two-column layout for the author/affilation part (only applicable for > 6 authors), use "authorcolumns"
%for producing a PDF according the PDF/A standard, add "pdfa"

%\pdfoutput=1 %uncomment to ensure pdflatex processing (mandatatory e.g. to submit to arXiv)
\hideLIPIcs  %uncomment to remove references to LIPIcs series (logo, DOI, ...), e.g. when preparing a pre-final version to be uploaded to arXiv or another public repository

%\graphicspath{{./graphics/}}%helpful if your graphic files are in another directory

\usepackage{times}
\usepackage{soul}
\usepackage{url}
\usepackage{amsmath}
\usepackage{amssymb}
\usepackage{amsthm}
\usepackage{mathtools}
\usepackage{booktabs}
\usepackage{cleveref}
\usepackage{bbm}
\usepackage{multirow}
\usepackage{stmaryrd}
\usepackage{numprint}
\usepackage{enumitem}
\usepackage{caption}
\usepackage{subcaption}
\usepackage{algorithm}
\usepackage[noend]{algpseudocode}

\newcommand{\hide}[1]{}

\DeclareMathOperator*{\argmax}{arg\,max}

\newtheorem{prop}{Proposition}

\bibliographystyle{plainurl}% the mandatory bibstyle

%\title{Using Machine Learning to Boost Constraint Acquisition}
\title{Guided Bottom-Up Interactive Constraint Acquisition}

%\titlerunning{Dummy short title} %TODO optional, please use if title is longer than one line

\author{Dimosthenis C. Tsouros}{KU Leuven, Belgium}{dimos.tsouros@kuleuven.be}{0000-0002-3040-0959}{}
\author{Senne Berden}{KU Leuven, Belgium}{senne.berden@kuleuven.be}{0000-0002-6473-5757}{}
\author{Tias Guns}{KU Leuven, Belgium}{tias.guns@kuleuven.be}{0000-0002-2156-2155}{}

%\author{Dimosthenis C. Tsouros}{KU Leuven, Country \and My second affiliation, Country \and \url{http://www.myhomepage.edu} }{johnqpublic@dummyuni.org}{https://orcid.org/0000-0002-1825-0097}{(Optional) author-specific funding acknowledgements}%TODO mandatory, please use full name; only 1 author per \author macro; first two parameters are mandatory, other parameters can be empty. Please provide at least the name of the affiliation and the country. The full address is optional. Use additional curly braces to indicate the correct name splitting when the last name consists of multiple name parts.

%\author{Joan R. Public\footnote{Optional footnote, e.g. to mark corresponding author}}{Department of Informatics, Dummy College, [optional: Address], Country}{joanrpublic@dummycollege.org}{[orcid]}{[funding]}

\authorrunning{D. Tsouros, S. Berden and T. Guns} %TODO mandatory. First: Use abbreviated first/middle names. Second (only in severe cases): Use first author plus 'et al.'

\Copyright{Dimosthenis Tsouros, Senne Berden and Tias Guns} %TODO mandatory, please use full first names. LIPIcs license is "CC-BY";  http://creativecommons.org/licenses/by/3.0/

\begin{CCSXML}
<ccs2012>
   <concept>
       <concept_id>10010147.10010178.10010205.10010207</concept_id>
       <concept_desc>Computing methodologies~Discrete space search</concept_desc>
       <concept_significance>500</concept_significance>
       </concept>
   <concept>
       <concept_id>10010147.10010257.10010258.10010259.10010263</concept_id>
       <concept_desc>Computing methodologies~Supervised learning by classification</concept_desc>
       <concept_significance>300</concept_significance>
       </concept>
   <concept>
       <concept_id>10003752.10003790.10003795</concept_id>
       <concept_desc>Theory of computation~Constraint and logic programming</concept_desc>
       <concept_significance>500</concept_significance>
       </concept>
   <concept>
       <concept_id>10010147.10010257.10010282.10011304</concept_id>
       <concept_desc>Computing methodologies~Active learning settings</concept_desc>
       <concept_significance>500</concept_significance>
       </concept>
 </ccs2012>
\end{CCSXML}

\ccsdesc[500]{Computing methodologies~Discrete space search}
\ccsdesc[300]{Computing methodologies~Supervised learning by classification}
\ccsdesc[500]{Theory of computation~Constraint and logic programming}
\ccsdesc[500]{Computing methodologies~Active learning settings}

%\ccsdesc[100]{\textcolor{red}{Replace ccsdesc macro with valid one}} %TODO mandatory: Please choose ACM 2012 classifications from https://dl.acm.org/ccs/ccs_flat.cfm 

\keywords{Constraint acquisition, Constraint learning, Active learning, Modelling} %TODO mandatory; please add comma-separated list of keywords

\category{} %optional, e.g. invited paper

\relatedversion{} %optional, e.g. full version hosted on arXiv, HAL, or other respository/website
%\relatedversiondetails[linktext={opt. text shown instead of the URL}, cite=DBLP:books/mk/GrayR93]{Classification (e.g. Full Version, Extended Version, Previous Version}{URL to related version} %linktext and cite are optional

%\supplement{}%optional, e.g. related research data, source code, ... hosted on a repository like zenodo, figshare, GitHub, ...
%\supplementdetails[linktext={opt. text shown instead of the URL}, cite=DBLP:books/mk/GrayR93, subcategory={Description, Subcategory}, swhid={Software Heritage Identifier}]{General Classification (e.g. Software, Dataset, Model, ...)}{URL to related version} %linktext, cite, and subcategory are optional

%\funding{This research received funding from the
%European Research Council (ERC) under the European
%Union’s Horizon 2020 research and innovation program
%(Grant No. 101002802, CHAT-Opt) }%optional, to capture a funding statement, which applies to all authors. Please enter author specific funding statements as fifth argument of the \author macro.

%\acknowledgements{I want to thank \dots}%optional

\nolinenumbers %uncomment to disable line numbering

%Editor-only macros:: begin (do not touch as author)%%%%%%%%%%%%%%%%%%%%%%%%%%%%%%%%%%
\EventEditors{Roland H. C. Yap}
\EventNoEds{1}
\EventLongTitle{29th International Conference on Principles and Practice of Constraint Programming (CP 2023)}
\EventShortTitle{CP 2023}
\EventAcronym{CP}
\EventYear{2023}
\EventDate{August 27--31, 2023}
\EventLocation{Toronto, Canada}
\EventLogo{}
\SeriesVolume{280}
\ArticleNo{5}
%%%%%%%%%%%%%%%%%%%%%%%%%%%%%%%%%%%%%%%%%%%%%%%%%%%%%%

\begin{document}

%\nolinenumbers %uncomment to disable line numbering

\maketitle

\begin{abstract}
Constraint Acquisition (CA) systems can be used to assist in the modeling of constraint satisfaction problems. 
In (inter)active CA, the system is given a set of candidate constraints and posts queries to the user with the goal of finding the right constraints among the candidates.
Current interactive CA algorithms suffer from at least two major bottlenecks. First, in order to converge, they require a large number of queries to be asked to the user. 
Second, they cannot handle large sets of candidate constraints, since these lead to large waiting times for the user. For this reason, the user must have fairly precise knowledge about what constraints the system should consider.
In this paper, we alleviate these bottlenecks by presenting two novel methods that improve the efficiency of CA.
First, we introduce a bottom-up approach named \textsc{GrowAcq} that reduces the maximum waiting time for the user and allows the system to handle much larger sets of candidate constraints. It also reduces the total number of queries for problems in which the target constraint network is not sparse.
Second, we propose a probability-based method to guide query generation
and show that it can significantly reduce the number of queries required to converge.
We also propose a new technique that allows the use of openly accessible CP solvers in query generation, removing the dependency of existing methods on less well-maintained custom solvers that are not publicly available.
Experimental results show that our proposed methods outperform state-of-the-art CA methods, reducing the number of queries by up to $60\%$. Our methods work well even in cases where the set of candidate constraints is 50 times larger than the ones commonly used in the literature.
\end{abstract}

\section{Introduction and related work}
% Tias note to self: read wrt focus/rel work
Constraint programming (CP) is considered one of the main paradigms for solving combinatorial problems, with many successful applications in a variety of domains. However, there are still challenges to be faced in order for CP technology to become even more widely used. One of the most important challenges is to ease the modeling process. The current assumption in CP is that the user first models the problem and that a solver is then used to solve it. However, modeling is a non-trivial task. Expressing a combinatorial problem as a set of constraints over decision variables is not straightforward and requires substantial expertise~\cite{freuder1999modeling}. As a result, modeling is considered a major bottleneck for the widespread adoption of CP~\cite{freuder1999modeling,freuder2018progress,freuder2014grand}.

%To overcome this obstacle, several techniques for modeling a constraint problem \mbox{(semi-)}automatically have been proposed, and nowadays assisting the user in modeling is regarded as an important research topic~\cite{kolb2016learning,de2018learning,freuder2014grand,freuder2018progress,lombardi2018boosting}. An area of research that has started to attract a lot of attention is that of {\em Constraint Acquisition (CA)}, which is an area where CP meets machine learning. In CA, the model of a constraint problem is acquired (i.e., learned) using a set of examples of solutions, and possibly non-solutions. The set of examples is either pre-existing and given to the system by the user, or generated by the system and posted to the user to label them.
This obstacle has led to research into a very different approach to modeling: that of \textit{learning} the constraint problem from data, as opposed to manually constructing it. This is the focus of the research area of \textit{constraint acquisition (CA)}, in which CP meets machine learning. In CA, the model of a constraint problem is acquired (i.e., learned) (semi-)automatically from a set of examples of solutions, and possibly non-solutions. CA methods can be categorized as \textit{active} or \textit{passive} on the basis of whether a user provides feedback during learning or not.

In {\em passive acquisition}, a dataset of examples of solutions and non-solutions is provided by the user upfront. Based on these examples, the system learns a set of constraints modeling the problem~\cite{beldiceanu2012model,berden2022learning,bessiere2004leveraging,bessiere2005sat,bessiere2017constraint,kumar2019acquiring,kumar2022learning,lallouet2010learning,lombardi2017empirical}. %Besides the training examples, the system is also provided with a learning bias (i.e., a space of possible constraints).
Approaches vary in the types of constraints they are able to learn and the methodologies they employ: \textsc{Conacq.1} is a version space algorithm for learning fixed-arity constraints \cite{bessiere2004leveraging,bessiere2005sat,bessiere2017constraint}, ModelSeeker learns global constraints that are taken from a predefined constraint catalog \cite{beldiceanu2012model}, and COUNT-CP is a generate-and-aggregate approach that can learn expressive first-order constraints \cite{kumar2022learning}. 
% OLD VERSION: An early approach to passive constraint acquisition is \textsc{Conacq.1}~\cite{bessiere2004leveraging,bessiere2005sat,bessiere2017constraint}, which uses the version space learning paradigm. Another approach is employed in ModelSeeker~\cite{beldiceanu2012model}, which uses a global constraints catalog and \red{predefined structures}. An approach based on generate-and-aggregate, named COUNT-CP, was proposed by Kumar et al.~\cite{kumar2022learning}, and is able to learn expressive first-order constraints.
None of these approaches are robust to errors in the labeled data. To this end, \textsc{SeqAcq} and \textsc{BayesAcq} were introduced, being robust to noise in the training set. In \textsc{SeqAcq}, a statistical approach based on sequential analysis is used \cite{prestwich2020robust}, while in \textsc{BayesAcq}, a naive Bayes classifier is trained, from which a constraint network is then derived \cite{prestwich2021classifier}.
%There also exist other works related to learning constraints in the passive setting, but a bit more distant from our current contribution~\cite{lallouet2010learning,lombardi2017empirical,kolb2017learning,pawlak2017automatic,kumar2019acquiring,berden2022learning}

In contrast to passive learning, {\em active} or {\em interactive acquisition} systems learn the constraints through interaction with the user, by asking queries. The main type of query used is the \textit{membership query}, which asks the user to classify a given example (i.e., an assignment to the variables of the problem) as a solution or a non-solution. An early work in active CA is the Matchmaker agent~\cite{freuder1998suggestion}, where users, when they answer a membership query negatively, also have to provide a violated constraint. In order to lower the expertise level required from the user, Bessiere et al. later proposed \textsc{Conacq.2}~\cite{bessiere2007query,bessiere2017constraint} -- an active version of \textsc{Conacq.1} that uses membership queries and does not require the user to provide any violated constraints. In~\cite{shchekotykhin2009argumentation}, \textsc{Conacq.2} was in turn extended to also accept arguments regarding \textit{why} examples should be rejected or accepted.

As the number of membership queries needed can be exponentially large for these methods \cite{bessiere2017constraint}, a new family of interactive algorithms was proposed that use \textit{partial queries} instead~\cite{arcangioli2016multiple,bessiere2013constraint,lazaar2021parallel,tsouros2020efficient,tsouros2021learning,tsouros2019structure,tsouros2020omissions,mquacq}. A partial query asks the user to classify a partial assignment to the variables. Using partial queries, CA systems are able to converge faster. \textsc{QuAcq} was the first system to use partial queries~\cite{bessiere2023learning,bessiere2013constraint}, and was later extended into \textsc{MultiAcq}~\cite{arcangioli2016multiple}. \textsc{MQuAcq} was later introduced to reduce the number of queries needed per learned constraint \cite{tsouros2020efficient,mquacq}, and \mbox{\textsc{MQuAcq-2}} further improved the performance by exploiting the structure of the constraints already learned \cite{tsouros2019structure}. %, which only looks for non-overlapping violated constraints, instead of extensively searching for all of them \cite{tsouros2019structure}. % like in MQuAcq and \textsc{MultiAcq}.  \tias{I left out some details, it was a real soup with varying levels of detail.}
%\textsc{MQuAcq-2} was also the first interactive CA algorithm to exploit the structure of the problem, by trying to complete quasi-cliques in the constraints already learned. 
%\tias{e.g. here you emphasize that it was 'the first' but it is not clear why that is important in this paper, so it feels like you are pushing on your work without context which can upset reviewers}

Despite these advancements in active CA, there are still significant obstacles for the technology to become usable in practice. % become more broadly applicable and to better assist the user during the modeling process. 
One of the main limitations is that it typically still requires asking a large number of queries to the user in order to find all constraints.
In addition, existing systems cannot handle large sets of candidate constraints in reasonable run times, and thus require significant expertise from the user in limiting the constraints the system should consider (and thus the size of the candidate set) upfront.
Finally, query generation -- a highly important part of the CA process -- currently requires the use of customized solvers that are not publicly available and are not as well-maintained as conventional solvers. Without the use of such customized solvers, current active CA algorithms can lead to very high query generation times or are sometimes unable to converge to the correct set of constraints when time limits are imposed~\cite{addi2018time,tsouros2020efficient}.

We focus on the above limitations, and contribute the following improvements:

\begin{itemize}

    \item We present a novel query generation method named \textsc{PQ-Gen} that allows conventional constraint solvers to be used by CA algorithms while also ensuring convergence, removing the dependency on customized solvers.
    
    \item We propose a bottom-up learning approach named \textsc{GrowAcq} that uses any other CA algorithm to learn the constraints of an increasingly large problem. It starts learning with only a subset of variables and an associated subset of candidate constraints, and incrementally grows this set of variables and constraints. This allows it to handle significantly larger sets of candidate constraints and reduces the maximum waiting time for the user. %The latter is possible because, in each iteration, our bottom-up approach learns a simplified problem in a subset of the variables, allowing the CA algorithm to only consider a subset of the candidate constraints at once. 

    \item Finally, we introduce a better way to \textit{guide} the query generation process, with the goal of generating queries that learn the set of constraints faster.
    We propose an objective function for query generation that uses \textit{probabilistic estimates} of whether constraints are likely to hold or not. %the query will be answered positively or negatively. %We then show that we can estimate this for every candidate constraints an oracle to decide whether the query should satisfy or violate certain candidate constraints, based on the expected answer for the query. We then show how we can model the oracle using a probabilistic estimate of how likely a constraint is to belong to the target set of constraints. 
    We demonstrate the potential of this method by using a simple counting-based approach as probabilistic estimator. %We use that estimate to remove faster constraints with low probability from the set of candidates and also learn constraints from $C_T$ needing a lower amount of queries.
 
\end{itemize}

The rest of the paper is structured as follows. Some background on CA is given in Section~\ref{sec:back}.  \Cref{sec:qgen,sec:growacq,sec:guided} present our proposed methods. An experimental evaluation is given in Section~\ref{sec:exp}. Finally, Section~\ref{sec:concl} concludes the paper.

\section{Background}
\label{sec:back}

%In this section, we give the necessary background on CA. 
We now introduce some basic notions regarding constraint satisfaction problems and interactive constraint acquisition.

\subsection{Constraint satisfaction problems}

\label{def:csp}

A \textit{constraint satisfaction problem} (\textit{CSP}) is a triple $P = (X, D, C)$, consisting of:

\begin{itemize}

\item a set of $n$ variables $X = \{x_1, x_2, ..., x_n\}$, representing the entities of the problem,

\item a set of $n$ domains $D = \{D_1, D_2, ..., D_n\}$, where $D_i \subset \mathbb{Z}$ is the finite set of values for $x_i$,

\item a constraint set (also called constraint network) $C = \{c_1, c_2, ..., c_t\}$.

\end{itemize}

A \textit{constraint} $c$ is a pair ($rel(c)$, $var(c)$), where $var(c)$ $\subseteq X$ is the \textit{scope} of the constraint and $rel(c)$ is a relation over the domains of the variables in $var(c)$ that specifies (implicitly or explicitly) what assignments are allowed. $|var(c)|$ is called the \textit{arity} of the constraint. %$var(C)$ denotes the set of variables involved in the scope of at least one $c \in C$, i.e., $var(C) =  \bigcup_{c \in C} var(c) $
The constraint set $C[Y]$, where $Y \subseteq X$, denotes the set of constraints from $C$ whose scope is a subset of $Y$. The set of solutions of a constraint set $C$ is denoted by $sol(C)$. 
A \textit{redundant} or \textit{implied} constraint $c \in C$ is a 
constraint in $C$ such that $sol(C) = sol(C\setminus \{c\})$. 

A (partial) \textit{assignment} $e_Y$ is an assignment over a set of variables $Y \subseteq X$. $e_Y$ is \textit{rejected} by a constraint $c$ iff $var(c)$ $\subseteq Y$ and the projection $e_{var(c)}$ of $e_Y$ on the variables in the scope $var(c)$, is not in $rel(c)$, that is, is not allowed by the constraint.
$\kappa_C(e_Y)$ represents the subset of constraints from $C[Y]$ that reject $e_Y$, i.e., $\kappa_C(e_Y) = \{c \,|\, c \in C[Y] \land e_{var(c)} \notin rel(c)$ \}.%\tias{add a formula? $\kappa_C(e_Y) = \{c \in C[Y] \,|\, var(c) \subseteq Y \wedge e_{var(c)} \notin rel(c)$}

A complete assignment $e$ that is accepted by all the constraints in $C$ is a \textit{solution} to $C$, i.e., $e \in sol(C)$.
A partial assignment
%\tias{inconsistent: partial assignment (here) or parial example (first sentence prev para). I propose partial assignment, please make consistent everywhere} \dt{done}
$e_Y$ is called a \textit{partial solution} to $C$ iff it is accepted by all the constraints in $C[Y]$. Note that a partial solution to $C$ may not be extendable to a complete one, due to constraints not in $C[Y]$.
%\dt{I prefer not to put it that way. As some variables of a constraint with are not in $Y$, we cannot say that it {\em violates it}. I prefer the "Note that a partial solution may not be extendable to a complete one, due to constraints not in $C[Y]$"} 
%Note that a partial solution may violate a constraint not in $C[Y]$, so it is not guaranteed that it can be completed to a full solution. 

\subsection{Active constraint acquisition with partial membership queries}
In CA, the pair $(X, D)$ is called the \textit{vocabulary} of the problem at hand and is common knowledge shared by the user and the system. Besides the vocabulary, the learner is also given a \textit{language} $\Gamma$ consisting of {\em fixed-arity} constraint relations. Using the vocabulary $(X, D)$ and the constraint language $\Gamma$, the system generates the \textit{constraint bias} $B$, which is the set of all possible candidate constraints for the problem.

Let $C_T$, the target constraint network, be an unknown set of constraints such that for every assignment $e$ over $X$ it holds that $e \in sol(C_T)$ iff $e$ is a solution to the problem the user has in mind.
The goal of CA is to learn a constraint set $C_L \subseteq B$ that is equivalent to $C_T$. Like other works, we assume that the bias $B$ can represent $C_T$, i.e., there exists a $C \subseteq B$ s.t. $ sol(C) = sol(C_T)$. 

In active CA, the system interacts with the user while learning the constraints.
A \textit{membership query}~\cite{angluin1988queries} in this setting is a question $ASK(e_X)$, asking the user whether a complete assignment $e_X$ is a solution to the problem that the user has in mind. % The possible answers are \textit{yes} and \textit{no}.
%The answer of the user to a membership query is positive if $e_X \in sol(C_T)$ and negative otherwise. 
A \textit{partial query} $ASK(e_Y)$, with $Y \subset X$, 
asks the user to determine if $e_Y$, which is an assignment in $D^Y$, 
is a partial solution with respect to $C_T[Y]$. %i.e., if $e_Y \in sol(C_T[Y])$.
%For each $c \in B$ such that $\nexists e \,|\, ASK(e) = True \land e \notin sol(c)$, we use the notation $c \in C_T$. 
We use the notation $c \in C_T$  iff $\,\forall \, e \in D^Y$ with $var(c) \subseteq Y \subseteq X$, $ASK(e_Y) = True \implies e_{var(c)} \in sol(c)$.

While in passive acquisition there are methods that can handle noisy answers~\cite{prestwich2020robust,prestwich2021classifier}, this is not the case for active acquisition. For this reason, in this work, we follow the assumption that the user answers all queries correctly.

A query $ASK(e_Y)$ is called \textit{irredundant} iff the answer is not implied by any information already available to the system. That is, the query is irredundant iff $e_Y$ is rejected by at least one constraint from the bias $B$ and is not rejected by the network $C_L$ learned thus far.
The first condition captures that $\kappa_B(e_Y)$ cannot be empty, since if $\kappa_B(e_Y)$ would be empty, the answer to the query $ASK(e_Y)$ would have to be `yes', based on the assumption that $C_T$ is representable by the constraints in $B$. The second condition captures that $e_Y$ should not be rejected by any constraint in the learned network $C_L$, %since otherwise it would certainly be classified as negative.
since otherwise the user would certainly answer `no' to the query.

% Generic pseudocode -------------------------------------------------

\begin{algorithm}[t]
\caption{Constraint Acquisition through partial queries}\label{alg:general}
\begin{algorithmic}[1]

\Require $X$, $D$, $B$, $C_{in}$ ($X$: the set of variables, $D$: the set of domains, $B$: the bias, $C_{in}$: an optional set of known constraints)
\Ensure $C_L$ : a learned constraint network

\State $C_L \leftarrow C_{in}$

\While {True}

	\State Generate an $e$ accepted by $C_L$ and rejected by $B$

	\If{ $e$ = nil } \Return $C_L$  \Comment{Stopping condition}
	\EndIf
    
    \If{$ASK(e)$ = Yes}     \Comment{Ask (partial) membership query $e$}
        \State Remove the constraints rejecting $e$, namely $\kappa_{B}(e)$, from $B$
    \Else
        %\State Find one (or more) minimal scopes $S$ of violated constraints $c \in \kappa_{B}(e) \land ASK(e_{var(C)}) = False$ through partial queries%\land $

        % ALT tg
        %\State Query and find a minimal scope $S$ in $e$ for which $ASK(e_S) = No$
        %\State Find one (or more) minimal scope $S$ of constraints $c \in \kappa_{B}(e)$ $ \land ASK(e_{var(C)}) = False$ through partial queries%\land $
        % ALT2 tg
        \State Find one (or more) minimal scopes $S$ in $e$ for which $|\kappa_{B}(e_S)|\geq1$ and $ASK(e_S) =$ No
 %\Comment{using partial queries}        
                
        \State Find all $\{c \in C_T \mid var(c) = S\}$ through partial queries; 
        add to $C_L$, remove from $B$

        %\dt{not necessary to ask queries always, e.g. findscope will directly return a scope if the violated constraints in B all have the same scope}
        %\dt{similary for findc}
        %\dt{The scope(s) found via findscope have the property that are scopes of constraints in C_T. We do not want just constraints from B}
        %\dt{same for FindC}
        
    \EndIf

\EndWhile

\end{algorithmic}
\end{algorithm}

% End of Generic pseudocode ------------------------------------------
%\subsection{Algorithms for interactive CA}

\Cref{alg:general} presents the generic process followed by active CA methods with partial queries. The learned set $C_L$ is first initialized either to the empty set or to a set of constraints given by the user that is known to be part of $C_T$ (i.e., $C_{in} \subset C_T$) (line 1). Then the main loop of the acquisition process begins, where, in every iteration, the system first generates an irredundant query (line 3) and posts it to the user (line 5). If the query is answered positively, then the candidate constraints from $B$ that violate it are removed (line 6). Otherwise, the system has to find one or more constraints from $C_T$ that violate the query. This is done in two steps. First, queries are asked to find the scope of a constraint in $\kappa_{C_T}(e)$ (line 8). Then, queries are asked to find all constraints $c \in C_T$  with that scope (line 9). %(i.e., for which the user indicates that it forbids a partial assignment $e$ that has to be forbidden). \dt{this was a bit unclear}

The acquisition process has \textit{converged} on the learned network $C_L \subseteq B$ iff $C_L$ agrees with the set of all labeled examples $E$, and for every other network $C \subseteq B$ that agrees with $E$, it holds that $sol(C) = sol(C_L)$. This is proved if no query could be generated at line 3, as in this case, all remaining constraints in $B$ (if any) are redundant.
If the first condition
%(i.e., $C_L$ agrees with $E$)
is true but the second condition 
%($\exists\, C \subseteq B \,|\, e \in sol(C) \,\forall\, e \in E \,\land\, sol(C) \neq sol(C_L)$)
has not been proved when the acquisition process finishes,
%\tias{you mean, when $e=nil$?}\dt{yes, it is being explained more in the next sentence} 
\textit{premature convergence} has occurred. This can happen when the query generation at line 3 returns $e=nil$, but without having proved that an irredundant query does not exist (e.g., because of a time limit). 
%\tias{this paragraph can use some work, but I'm running out of time}

Existing algorithms like \textsc{QuAcq}~\cite{bessiere2023learning,bessiere2013constraint}, \textsc{MQuAcq}~\cite{tsouros2020efficient,mquacq} and \textsc{MQuAcq-2}~\cite{tsouros2019structure} follow this template, but differ mainly in how they implement lines 3, 8 and 9, and hence how many constraints they are able to learn in each iteration. Examples of functions used to locate the scope of a constraint (line 8) are \textit{FindScope}~\cite{bessiere2023learning,bessiere2013constraint} or the more efficient \textit{FindScope-2}~\cite{tsouros2020efficient}. To learn the constraints in the scope found (line 9), the \textit{FindC} function is typically used~\cite{bessiere2023learning,bessiere2013constraint}.

\section{Using conventional solvers for query generation}
\label{sec:qgen}
Query generation (line 3 of~\Cref{alg:general}) 
is one of the most important parts of the CA process. %The quality of the generated queries determines how fast a constraint from $C_T$ can be found and how fast candidate constraints can be removed from $B$. In addition, query generation is where the convergence test takes place.
It aims to find an \textit{irredudant} membership query (i.e., a (partial) assignment that does not violate $C_L$ but violates at least one $c \in B$) that will be asked to the user. Thus, it can be formalized as follows:

%$$\text{find } e_Y \text{ s.t. } e_Y \in sol(C_L[Y] \cup \{ \neg(\bigwedge\limits_{c_i \in B[Y]} c_i) \} )$$
\begin{center}
$\text{find } e_Y \text{ s.t. } e_Y \in sol( C_L[Y] \wedge \bigvee_{c_i \in B[Y]} \neg c_i ),$
\end{center}

which can be formulated as a CSP with variables $Y$ and constraints $C_L[Y] \wedge \bigvee_{c_i \in B[Y]} \neg c_i$.

% - Query generation requires to solve the following problem: find e \in D s.t. all(c[e] for c in C_L) and ~all(c[e] for c in B)
%This is a standard CSP problem, where we can introduce auxiliary variables $b_c$ reified to the truth value of each constraint in B, such that the existential requirement becomes a simple disjunction over variables.

\subsection{Problems when using conventional solvers}
\label{sect:problems_conventional_solvers}
In principle, this CSP could be solved using any conventional CP solver. %, trying to find a solution in the set of variables of the problem. 
However, this can lead to issues for the following two reasons. 

\textbf{A large bias}
At the start of the acquisition process, the set of candidate constraints $B$ can be very large. This makes the propagation of the constraint $\bigvee_{c_i \in B[Y]} \neg c_i$ time-consuming, and severely slows down the query-generation process.

\textbf{Indirectly implied constraints}
%If there are redundant constraints in the bias, then 
At the end of the acquisition process, only constraints that are implied by $C_L$ remain in $B$, if any. In this case, it will be impossible to generate a query that does not violate $C_L$ and violates at least one constraint from $B$. However, propagation is often unable to prove such implications when they are indirect and involve multiple variables and constraints. For this reason, solvers internally end up enumerating all possible variable assignments satisfying $C_L$ and checking if the constraint $\bigvee_{c_i \in B[Y]} \neg c_i$ can be satisfied. This can be very time-consuming, and a time limit is usually imposed on query generation, leading to \textit{premature convergence}.

%Enumerating all these assignments is too time-consuming, so when conventional solvers are used, a time limit $t$ is imposed. If the time limit is reached and no suitable query has been found, the system assumes that no suitable query exists and terminates. This leads to premature convergence.

%\textbf{Dealing with these problems}
In order to limit the large runtimes in a more advanced way than by simply imposing a time bound~$t$, Addi et al. proposed a method using conventional solvers named \textsc{TQ-Gen}~\cite{addi2018time}. It iteratively tries to solve the query generation problem, by gradually reducing the number of variables taken into account by a proportion $\alpha \in \left]0, 1\right[$, until a query can be generated within a small time limit $\tau$. %In more detail, it considers a subset $Y \subseteq X$%, and the part of the bias that pertains to it (i.e., $B[Y]$). 
%and if an irredundant query cannot be found in $B[Y]$ within time bound $\tau$, $Y$ is further reduced to $Y'$ by removing a proportion $\alpha \in \left]0, 1\right[$ of the variables (where $\alpha$ is a hyperparameter).% and the process is repeated on $B[Y']$. 
This is repeated until either an irredundant query is generated, or a global time bound $t$ is reached, leading to premature convergence. %In the latter scenario, premature convergence takes place.
%If the global time bound $t$ is reached and no query has been generated, but it has not been proved that a valid query does exist, then the method has \textit{converged prematurely}.
However, choosing the right hyperparameters for $t$ and $\tau$ %that do not lead to premature convergence
is problem-specific~\cite{addi2018time} and requires tuning, and thus more interaction with the user. %I added 'more interaction with the user' as this is really the reason that tuning is undesirable in this setting. Otherwise, there's no issue: just tune 2 hyperparameters offline and use TQ-Gen. %Finding these requires experimentation, and hence user interaction. 
%For this reason, customized solvers that do not converge prematurely are typically used instead.  

%\smallskip

%Many existing CA systems suffer from the premature convergence problem~\cite{arcangioli2016multiple,bessiere2013constraint,bessiere2014boosting},
%as they use timeouts to control the runtime of solving the above problem.

\subsection{Customized solvers}
%The problem of premature convergence has led state-of-the-art CA algorithms to use
To avoid premature convergence, a CP solver can be customized to store partial assignments that satisfy every $c \in C_L[Y]$ and violate at least one $c \in B[Y]$ during the search. 
Given an objective function, such as maximizing the number of assigned variables, in every non-failing node of the search tree it will check the above property and, if fulfilled, store the best-scoring partial assignment. 

As these customized solvers are guaranteed to find valid partial solutions, their use will never lead to premature convergence.
In addition, finding a partial query to return is not time-consuming (especially when combined with specialized search heuristics~\cite{tsouros2020efficient}), even when the bias is large.
However, such custom solvers are not publicly available and are typically not based on the latest version of state-of-the-art solvers. This also means that the corresponding active CA methods are heavily tied to those particular customized solvers.

%OLD To overcome this drawback, custom solvers have been used in some CA systems. The key difference is that in custom solvers, the search procedure is modified, in order to take into account also \textit{partial} assignments that satisfy $C_L \cup \{\neg(\bigwedge\limits_{c_i \in B} c_i)\}$, instead of only searching for complete ones. Thus, when they are interrupted, custom solvers will still return a partial assignment, rather than \textit{nil}. This way, they avoid trying to prove the implications in the solver, and the acquisition process can continue without premature convergence occurring. When using custom solvers, the acquisition systems converge when either $B \setminus C_L$ is empty, or if the solver is able to prove that no solution exists that satisfies $C_L$ and violates at least one $c \in B$, within the time limit.

\smallskip

\subsection{Projection-based Query Generation}
We now introduce a method named \textit{Projection-based Query Generation} (\textsc{PQ-Gen}) that makes it possible to use state-of-the-art conventional solvers for query generation, without premature convergence. %, alleviating the issues described above. 
%We first discuss how we tackle each of these issues, and then show how objective functions can be used.
Our proposed method is shown in \Cref{alg:pqgen}.

% Generate query pseudocode ----------------------------------------------
\begin{algorithm}[t]
\caption{\textsc{PQ-Gen}: Projection-based Query Generation}\label{alg:pqgen}
\begin{algorithmic}[1]

\Require $C_L$, $B$, $l$, $t$ ($B$: the set of candidate constraints (bias), $C_L$: set of known constraints, $l$: size limit, $t$: time limit)
\Ensure $e$: the query generated

\State $timer.start()$
\State $Y \leftarrow \bigcup_{c \in B} var(c)$
\If{ $|B| > l$ }
    \State $e \leftarrow \textrm{solve}(C_L[Y])$

    \If{ $\exists c \in B: e \notin sol(c)$ }
        \State \Return $e$
    \EndIf
\EndIf

%\State $C \leftarrow C_L[Y] \cup \{\neg(\bigwedge\limits_{c_i \in B[Y]} c_i)\}$
\State $e \leftarrow \textrm{solve}(C_L[Y] \wedge \bigvee_{c_i \in B[Y]} \neg c_i )$
\If{$timer.end() < t$}
    %\State $e \leftarrow maximize(e, obj,t - timer.end())$
    % Alt Tias:
    \State $e' \leftarrow \textrm{solve}(C_L[Y] \wedge \bigvee_{c_i \in B[Y]} \neg c_i, \mbox{maximize: } obj, \mbox{time limit:} t - timer.end())$
    %try $e \leftarrow \textrm{maximize}$ $obj$, time limit: $t - $timer.end()
%    \senne{The line above doesn't really fit the style of the rest of the pseudocode. We should make it a function call.}
    %\Comment{Optional: put objective}
    \If{$e' \neq nil$}
        \State \Return $e'$
    \EndIf

\EndIf

\State \Return $e$
\end{algorithmic}
\end{algorithm}
% End of Generate query pseudocode ---------------------------------------

\textbf{Avoiding indirectly implied constraints}
A key observation we make is that when generating a query on line 3, it might be that $\bigcup_{c \in B} var(c) \subset X$, that is, some variables have no more candidate constraints in $B$. These have become irrelevant, as both lines 6 and 8 are only concerned with $\kappa_B(e)$, which will not include these variables. So, to generate an irredundant query, it is sufficient to consider only the variables in $B$. %, and thus only of constraints in $C_L$ whose scope is a subset of this set of variables. T: this is explained below
This is not only faster, but also avoids indirectly implied constraints, as these are indirect through variables not used in $B$.

Thus, %in order to alleviate the problem of having indirectly implied constraints, 
our proposed query generator projects the variables down to $Y \subseteq X$, with 
$Y = \bigcup_{c \in B} var(c)$, thereby simplifying the problem to finding an assignment over $Y \subseteq X$. This will inherently result in a partial assignment when $Y$ is a strict subset of $X$, without requiring a custom solver.
%Algorithm~\ref{alg:pqgen}, shows the pseudo-code of the projection-based query generation.
Thus, we first compute the set of variables $Y$ relevant to the query (line 2), and project $C_L$ down to those variables (on lines 4 and 7). The solver then has to prove that there exists a query that satisfies $C_L[Y]$ and violates at least one constraint from $B$. %This is a simpler problem than considering the entire set of variables $X$, especially in the last iterations of CA when $B$ is small.

\textbf{Dealing with large biases} As mentioned above, having a large bias $B$ can severely slow down the solver during query generation because propagating the $\bigvee_{c_i \in B[Y]} \neg c_i$ constraint takes a long time. However, we observe that when $B$ contains many constraints, the property that a query $e$ violates at least one of these is usually satisfied without needing to enforce this. Hence, we propose not using this constraint when the bias is larger than some threshold (lines 3 to 6 in \Cref{alg:pqgen}). If in a post-hoc check, it turns out that the generated query violates at least one $c \in B$, it is directly returned (line 6). Otherwise, we again generated a query, this time \textit{with} the constraint enforcing that there must exist a constraint in $B$ that is violated (line 7).

\textbf{Optimizing the query}
The above ensures that we will always find \textit{a} valid query. However, much better queries -- according to some objective function -- can often be found. This would take additional time, but is safe because, since a valid query has already been found, the optimization can always safely be interrupted. 
Given a time limit, we can hence call an optimization solver for the remaining time after a first valid query has been found (lines 8-11).

As expressed in Proposition~\ref{correctness_genq}, Algorithm~\ref{alg:pqgen} is correct.
\begin{prop} 
\label{correctness_genq}
Given a bias $B$, with an unknown target network $C_T$ being representable by $B$, and a learned constraint set $C_L$, if $nil$ is returned by~\Cref{alg:pqgen}, then the system has converged on $C_T[X]$.
\end{prop}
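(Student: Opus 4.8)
The plan is to show that the only way \Cref{alg:pqgen} returns $nil$ is when no irredundant query exists over the relevant variables, and then to invoke the characterization of convergence given in the background section (a query cannot be generated at line 3 of \Cref{alg:general} iff all remaining constraints in $B$ are redundant w.r.t.\ $C_L$, which implies convergence on $C_T[X]$). So the argument splits into two parts: first, that returning $nil$ certifies the non-existence of an irredundant query; second, that projecting onto $Y = \bigcup_{c \in B} var(c)$ loses nothing, i.e.\ an irredundant query exists over $X$ iff one exists over $Y$.

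For the first part, I would trace the control flow. The value $e$ that is ultimately returned on line 12 is the output of the complete (non-optimizing) solve on line 7 of the CSP $C_L[Y] \wedge \bigvee_{c_i \in B[Y]} \neg c_i$. Since this solver call is run without a time limit, it is a complete search: if it returns $nil$, then this CSP is genuinely unsatisfiable, meaning there is no assignment over $Y$ satisfying $C_L[Y]$ that violates some constraint in $B[Y]$ — i.e.\ no irredundant query over $Y$ exists. The branch on lines 3--6 can only cause an \emph{early} return of a genuine query (it never returns $nil$), and the optimization branch on lines 8--11 likewise only returns a non-$nil$ $e'$ or falls through to line 12. Hence $nil$ is returned only when line 7's complete solve established unsatisfiability. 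This is the step I'd want to state carefully, since the correctness hinges on line 7 being an exhaustive (unbounded) solve rather than a time-limited one.

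For the second part — the projection argument — I would argue as in the ``Avoiding indirectly implied constraints'' paragraph: every variable $x \notin Y$ appears in no constraint of $B$, so for any candidate $c \in B$ we have $var(c) \subseteq Y$, hence $B[Y] = B$. Now suppose, for contradiction, that an irredundant query $e_Z$ over some $Z \subseteq X$ exists: it satisfies $C_L[Z]$ and is rejected by some $c \in \kappa_B(e_Z)$, so $var(c) \subseteq Z$ and $var(c) \subseteq Y$. Restricting $e_Z$ to $e_{Z \cap Y}$, or more carefully projecting to $Y$ and using that $C_L[Y]$ only constrains variables in $Y$, yields an assignment over $Y$ that still satisfies $C_L[Y]$ and is still rejected by $c \in B[Y]$ — contradicting unsatisfiability of line 7's CSP. (One should be slightly careful that a partial solution to $C_L$ over $Y$ need not extend to all of $X$, but this does not matter: irredundancy and the subsequent lines 6--9 of \Cref{alg:general} only ever reference $\kappa_B$, which lives entirely within $Y$.) Therefore no irredundant query exists at all, and by the convergence criterion stated in the background, the system has converged on $C_T[X]$.

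The main obstacle I anticipate is making the projection step airtight: one must be precise that ``irredundant query over $X$'' and ``irredundant query over $Y$'' coincide \emph{as far as the acquisition algorithm is concerned}, even though these are not literally the same set of assignments, because the algorithm's notion of progress (removing or learning constraints via $\kappa_B$) is insensitive to the variables outside $Y$. Everything else — the control-flow case analysis and the appeal to the already-established convergence characterization — is routine.
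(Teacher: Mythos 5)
Your proposal is correct and follows essentially the same route as the paper's proof: $nil$ certifies unsatisfiability of the query-generation CSP over $Y$, and since $Y = \bigcup_{c\in B} var(c)$ gives $B[Y]=B$, projecting complete solutions of $C_L$ onto $Y$ shows every remaining constraint of $B$ is implied, hence convergence. Your explicit control-flow check that $nil$ can only arise from the unbounded solve on line 7 is a detail the paper glosses over; just note that your contradiction step is only airtight for complete assignments over $X$ (a partial irredundant $e_Z$ with $Z\cap Y \subsetneq Y$ need not extend to a solution of line 7's CSP), but that complete-assignment case is all the convergence criterion actually requires.
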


\begin{proof}
When $nil$ is returned by~\Cref{alg:pqgen}, it means that $\nexists e \in sol(C_L[Y] \wedge \bigvee_{c_i \in B[Y]} \neg c_i)$, with $Y = \bigcup_{c \in B} var(c)$, i.e., $\nexists e \in sol(C_L[Y] \wedge \bigvee_{c_i \in B[Y]} \neg c_i)$. In order to prove convergence over all of $X$, we must have $\nexists e \in sol(C_L[X] \wedge \bigvee_{c_i \in B[X]} \neg c_i)$. We will now show that when $Y = \bigcup_{c \in B} var(c)$, it means that

\begin{center}
$\nexists e \in sol(C_L[Y] \wedge \bigvee_{c_i \in B[Y]} \neg c_i) \implies \nexists e \in sol(C_L[X] \wedge \bigvee_{c_i \in B[X]} \neg c_i)$
\end{center}

Assume that \Cref{alg:pqgen} returns $nil$, i.e., that no assignment exists in a $Y \subset X$ that is accepted by $C_L[Y]$ and rejected by $B[Y]$. This means that all the constraints in $B[Y]$ are proved to be implied by the constraints in $C_L[Y]$. Thus, the remaining constraints in $B$, that are not proved to be redundant, are the constraints $c \in B \setminus B[Y]$. When we know that $Y = \bigcup_{c \in B} var(c)$ it means that $B[Y] = B$, so $B \setminus B[Y] = \emptyset$. As a result, in this case, all the constraints in $B$ are proved to be implied. Hence, no assignment that is accepted by $C_L$ and rejected by $B$ exists in $X$.
\end{proof}

%An unrelated practical speed-up can also be obtained by not instantiating the solver over multiple thousands of candidate constraints from B, instead generating a query on a smaller subset of them.

\section{Bottom-up Constraint Acquisition}
\label{sec:growacq}

%In this section, we focus on our second contribution, proposing a bottom-up growing approach for CA. % That is what the section title already says

%We observed that all current active CA algorithms always consider the full set of variables in the beginning, yet in line 8 of \Cref{alg:general} recursively use a top-down procedure until a single scope is obtained, from which a constraint can then be learned. The worst-case complexity of the best methods used to find the scope of a constraint, in terms of the number of queries required, is $O(log|Y|)$, where $Y \subseteq X$ is the set of variables considered (Proposition 2 in~\cite{bessiere2013constraint,tsouros2020efficient}). As a result, many queries are needed in order to focus on specific scopes, involving only a small subset of variables of the problem. 

We start by observing that all current active CA algorithms always consider either the full set of variables $X$, or a large subset $Y \subseteq X$, in their top-level loop (lines 2-9 in Algorithm~\ref{alg:general}). 
%However, this can be problematic in some cases.
%
This generally leads to complete or almost-complete queries getting generated (line 3 of Algorithm~\ref{alg:general}). %This in turn is undesirable for two reasons. 
However, larger queries are generally harder to answer than smaller queries \cite{tsouros2020efficient}. Also, a large initial query leads to many additional queries getting posed in the scope-finding method on line 8. That is because the worst-case complexity of the best scope-finding methods, in terms of the number of queries required, is $\Theta(log(|Y|))$, where $Y \subseteq X$ is the set of variables considered~\cite{tsouros2020efficient}.

Additionally, by directly considering the whole set of variables, the CA algorithm has to represent and operate on the entire set of candidate constraints (i.e., the bias $B$) at once. The bias is used in many parts of the acquisition process. Hence, the memory requirements and the run time of the acquisition process increase significantly as the bias grows, either because the problems contain more variables or because the language $\Gamma$ given to the system includes a larger number of relations. This means that, in practice, state-of-the-art active CA methods are only applicable to problems with not too many variables or problems for which the user already has relatively precise knowledge about what constraints the system should consider (which corresponds to the bias being small).

%although the effect of the size of $B$ on the number of queries is not substantial~\cite{tsouros2020efficient,bessiere2023learning}

% \textsc{GrowAcq} pseudocode -------------------------------------------------

\begin{algorithm}[t]
\caption{Growing Acquisition}\label{alg:growacq}
\begin{algorithmic}[1]

\Require $\Gamma$, $X$, $D$, $C_{in}$ ($\Gamma$: the language, $X$: the set of variables, $D$: the set of domains, $C_{in}$: an optional set of known constraints)
\Ensure $C_L$ : a constraint network

\State $C_L \leftarrow \emptyset$

\State $Y \leftarrow \emptyset$ 

\While { $|Y| \leq |X|$ }

    \State $x \leftarrow x \in (X \setminus Y$)
    \State $Y \leftarrow Y \cup \{x\}$
    
    \State $B \leftarrow \{ c \mid rel(c) \in \Gamma \land var(c) \subseteq Y \land x \in var(c)\}$

    \State $C_L \leftarrow$ Acq($Y$, $D^Y$, $B$, $C_L \cup C_{in}[Y]$)

%	\State $[e, Y]$ $\leftarrow$ GenQuery($C_L$, $B$)

%	\If{ $e$ = nil } \Return ``$C_L$ converged''
%	\EndIf

\EndWhile

\State \Return $C_L$

\end{algorithmic}
\end{algorithm}

% End of Generic pseudocode ------------------------------------------

To improve on this, we propose a novel \textit{meta-}algorithm named \textsc{GrowAcq} (Algorithm~\ref{alg:growacq}). 
The key idea is to call a CA algorithm on an increasingly large subset of the variables $Y \subseteq X$,
each time using only a relevant unexplored subset of the bias.
%each time using a different subset of the bias.
\textsc{GrowAcq} begins with $Y = \emptyset$ (line 2) and gradually incorporates more variables (lines 3-5). Once a new variable $x_i \in X$ has been added to $Y$, the new problem becomes to find the new $C_T[Y]$. However, as $C_T[Y \setminus \{x_i\}]$ was already found in the previous iterations, the set of constraints to seek is actually $C_T[Y] \setminus C_T[Y \setminus \{x_i\}]$. To find $C_T[Y] \setminus C_T[Y \setminus \{x_i\}]$, any existing active CA algorithm can be used. We represent this with the function \textit{Acq} (line 7). %, which can be replaced by any state-of-the-art algorithm, like \textsc{QuAcq}, \textsc{MQuAcq} or \textsc{MQuAcq-2}.
In every iteration, only a part of the bias $B$ is needed, namely $B[Y] \setminus B[Y \setminus \{x_i\}]$, and as shown in Lemma~\ref{lemma:bias}, the bias constructed at line 6 is equivalent to $B[Y] \setminus B[Y \setminus \{x_i\}]$.

\begin{lemma}
\label{lemma:bias}
Let $Y_i$ be the set of variables $Y$ in iteration $i$ after line 5 of \Cref{alg:growacq} and $B_i = \{ c \mid rel(c) \in \Gamma \land var(c) \subseteq Y_i \land x_i \in var(c)\}$ be the bias $B$ constructed at line 6 in iteration $i$. It holds that $B_i = B[Y_i] \setminus B[Y_{i-1}]$.
\end{lemma}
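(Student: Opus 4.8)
The plan is to prove the set equality $B_i = B[Y_i] \setminus B[Y_{i-1}]$ by double inclusion, working directly from the definitions of $B$, $B[\cdot]$, $B_i$, and the relationship $Y_i = Y_{i-1} \cup \{x_i\}$ established at lines 4--5 of \Cref{alg:growacq}. Recall that $B$ is the full bias $\{c \mid rel(c) \in \Gamma \land var(c) \subseteq X\}$, that $B[Z] = \{c \in B \mid var(c) \subseteq Z\}$ for any $Z \subseteq X$, and that $B_i = \{c \mid rel(c) \in \Gamma \land var(c) \subseteq Y_i \land x_i \in var(c)\}$. Note also that $x_i \notin Y_{i-1}$ by the choice $x \in (X \setminus Y)$ on line 4.

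First I would take an arbitrary $c \in B_i$. Then $rel(c) \in \Gamma$ and $var(c) \subseteq Y_i \subseteq X$, so $c \in B$ and in fact $c \in B[Y_i]$. Moreover $x_i \in var(c)$, and since $x_i \notin Y_{i-1}$, the scope $var(c)$ is not a subset of $Y_{i-1}$, hence $c \notin B[Y_{i-1}]$. Therefore $c \in B[Y_i] \setminus B[Y_{i-1}]$. Conversely, take $c \in B[Y_i] \setminus B[Y_{i-1}]$. From $c \in B[Y_i]$ we get $rel(c) \in \Gamma$ and $var(c) \subseteq Y_i = Y_{i-1} \cup \{x_i\}$. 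From $c \notin B[Y_{i-1}]$, combined with $c \in B$, we get $var(c) \not\subseteq Y_{i-1}$; but since $var(c) \subseteq Y_{i-1} \cup \{x_i\}$, the only element of $var(c)$ that can lie outside $Y_{i-1}$ is $x_i$, so it must be that $x_i \in var(c)$. Thus $c$ satisfies all three defining conditions of $B_i$, giving $c \in B_i$. The two inclusions together yield the claimed equality.

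This argument is essentially a routine unfolding of definitions, so there is no real obstacle; the only point requiring a moment of care is the step "$var(c) \subseteq Y_{i-1} \cup \{x_i\}$ and $var(c) \not\subseteq Y_{i-1}$ imply $x_i \in var(c)$," which relies on $x_i$ being the unique variable in $Y_i \setminus Y_{i-1}$ — precisely what lines 4--5 of \Cref{alg:growacq} guarantee, since $x_i$ is drawn from $X \setminus Y_{i-1}$ and then added to form $Y_i$. I would make sure to state this disjointness explicitly before invoking it.
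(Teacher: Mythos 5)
Your proof is correct and follows essentially the same route as the paper's: both unfold the definitions of $B[Y_i]$ and $B[Y_{i-1}]$, use $Y_{i-1} = Y_i \setminus \{x_i\}$, and observe that the set difference consists exactly of the constraints whose scope contains $x_i$. Your double-inclusion presentation is just a more explicit rendering of the paper's one-line set-difference computation, and your explicit note that $x_i \notin Y_{i-1}$ is a welcome touch of rigor.
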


\begin{proof}
At line 6 of \Cref{alg:growacq}, the bias $B$ is constructed. For each iteration $i$, it is constructed as $B_i = \{ c \mid rel(c) \in \Gamma \land var(c) \subseteq Y_i \land x_i \in var(c)\}$. For a set of variables $Y_i$, the full bias, which includes all candidate constraints, is $B[Y_i] = \{c \mid rel(c) \in \Gamma \land var(c) \subseteq Y_i\}$. For the previous iteration, as $Y_{i-1} = Y_i \setminus \{x_i\}$, we know that $B[Y_{i-1}] = \{c \mid rel(c) \in \Gamma \land var(c) \subseteq Y_i \setminus \{x_i\} \}$. Thus, the additional constraints that are in $B[Y_i]$ and not in $B[Y_{i-1}]$ are the ones with a scope $var(c) \subseteq Y_i$ for which $x_i \in var(c)$: 
\begin{align*}
B[Y_i] \setminus B[Y_{i-1}] &= \{c \mid rel(c) \in \Gamma \land var(c) \subseteq Y_i\} \setminus \{c \mid rel(c) \in \Gamma \land var(c) \subseteq Y_i \setminus \{x_i\} \}\\
&= \{ c \mid rel(c) \in \Gamma \land var(c) \subseteq Y_i \land x_i \in var(c)\} = B_i
\end{align*}
Hence, it holds that $B_i = B[Y_i] \setminus B[Y_{i-1}]$ for $B_i = \{ c \mid rel(c) \in \Gamma \land var(c) \subseteq Y_i \land x_i \in var(c)\}$.
\end{proof}

This bottom-up approach alleviates the problems described above, i.e., starting from large initial queries and having to represent the whole bias from the beginning, in two ways. First, it naturally leads to partial queries of increasing size in the first step of the `inner' CA system (\Cref{alg:general} line 5). This is valuable since smaller queries are generally easier for the user to answer~\cite{tsouros2020efficient}, and also a smaller initial query leads to a lower worst-case number of additional queries to locate scopes.
Second, since the algorithm only stores and uses a small part of the bias at a time (line 6 of \Cref{alg:growacq}), it is able to handle significantly larger biases than the state-of-the-art. Not representing the whole bias in every iteration does not affect the algorithm's correctness, as we state in \Cref{correctness_growacq}.

\begin{prop} 
\label{correctness_growacq}
Given a bias $B$ built from a language $\Gamma$, with bounded arity constraints, and a target network $C_T$ representable by $B$, \textsc{GrowAcq} is correct (i.e., will learn a constraint set $C_L$ that is equivalent to $C_T$), as long as a correct (i.e., sound and complete) CA algorithm is used in line 7.

\end{prop}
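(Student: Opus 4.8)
The plan is to argue by induction on the iterations of the main loop of \Cref{alg:growacq}, showing that after the iteration in which variable $x_i$ is added, the maintained network $C_L$ is equivalent to $C_T[Y_i]$, where $Y_i$ is the current set of variables. Since the loop terminates with $Y = X$, the final $C_L$ is then equivalent to $C_T[X] = C_T$, which is exactly the correctness claim. The base case is the first iteration: $Y_1 = \{x_1\}$, $C_L$ starts at $\emptyset$, and the inner call $\textit{Acq}(Y_1, D^{Y_1}, B_1, C_{in}[Y_1])$ is asked to learn on the vocabulary $(Y_1, D^{Y_1})$ with bias $B_1$; by \Cref{lemma:bias}, $B_1 = B[Y_1]$, and since $C_T$ is representable by $B$, $C_T[Y_1]$ is representable by $B[Y_1]$, so soundness and completeness of \textit{Acq} give us $sol(C_L) = sol(C_T[Y_1])$.

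For the inductive step, assume that entering iteration $i$ we have a network $C_L$ with $sol(C_L) = sol(C_T[Y_{i-1}])$ (as CSPs over $Y_{i-1}$). After adding $x_i$, the inner call is $\textit{Acq}(Y_i, D^{Y_i}, B_i, C_L \cup C_{in}[Y_i])$. The key facts to assemble are: (1) by \Cref{lemma:bias}, $B_i = B[Y_i] \setminus B[Y_{i-1}]$, so the effective bias available to the inner call, namely the already-known constraints $C_L$ together with the fresh candidates $B_i$, can still represent $C_T[Y_i]$ — because $C_T[Y_i]$ decomposes as (a subset of) $C_T[Y_{i-1}]$, which $C_L$ is equivalent to, plus constraints whose scope includes $x_i$, all of which lie in $B[Y_i]$ and hence in $B_i$; (2) a sound and complete CA algorithm given known constraints $C_{in}' = C_L \cup C_{in}[Y_i]$ and bias $B_i$ will return $C_L'$ with $sol(C_L') = sol(C_T[Y_i])$, since the ``target'' relative to this call is $C_T[Y_i]$ and it is representable by $C_{in}' \cup B_i$. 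Chaining the equivalences yields $sol(C_L') = sol(C_T[Y_i])$, completing the induction.

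The main obstacle is fact (1) above: making precise the sense in which ``$C_L$ together with $B_i$ can represent $C_T[Y_i]$'' and hence that the inner call is being invoked on a well-posed acquisition instance. One must observe that any constraint $c \in C_T$ with $var(c) \subseteq Y_i$ either has $var(c) \subseteq Y_{i-1}$ — in which case its effect on $sol$ is already captured by $C_L$ via the induction hypothesis, so the inner algorithm is not required to (re-)learn it and it is harmless that $c \notin B_i$ — or has $x_i \in var(c)$, in which case $c \in B_i$ by \Cref{lemma:bias} and bounded arity, so it is a candidate the inner algorithm can learn. Thus $C_{in}' = C_L \cup C_{in}[Y_i]$ plus $B_i$ admits a subset whose solution set equals $sol(C_T[Y_i])$, which is precisely the representability precondition required to invoke soundness and completeness of \textit{Acq}. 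A minor additional point to check is that passing previously learned constraints as part of the ``known'' input $C_{in}$ to the inner call is consistent with the contract of a generic CA algorithm (\Cref{alg:general} line 1 allows this), so no information is lost and no spurious constraints are introduced.

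A secondary, more routine obstacle is bookkeeping around vocabularies: the inner call operates on $(Y_i, D^{Y_i})$ rather than $(X, D)$, and one should note that equivalence ``$sol(C) = sol(C_L)$'' is always understood relative to the variable set the call is defined on, and that $C_T[Y_i]$ is exactly the restriction of the user's target to the current vocabulary. Once these restrictions are tracked carefully, the argument is a clean induction with \Cref{lemma:bias} supplying the one nontrivial ingredient.
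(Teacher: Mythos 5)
Your proposal is correct and follows essentially the same route as the paper's proof: an induction over the iterations showing $C_L$ is equivalent to $C_T[Y_i]$ after iteration $i$, with Lemma~\ref{lemma:bias} supplying the key fact that the fresh bias $B_i$ contains every not-yet-covered target constraint (namely those whose scope includes $x_i$). If anything, your handling of the case $var(c)\subseteq Y_{i-1}$ — noting that such constraints need not be re-learnable from $B_i$ because their effect is already captured by the (merely equivalent, not identical) $C_L$ — is slightly more careful than the paper's statement that $(C_T[Y_{n+1}]\setminus C_L)\subseteq B_{n+1}$.
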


\begin{proof} (Sketch)

Let us now prove that if any correct algorithm is used in line 7 of Algorithm~\ref{alg:growacq} -- like \textsc{QuAcq}, \textsc{MQuAcq} or \textsc{MQuAcq-2} -- \textsc{GrowAcq} remains correct. We will subscript sets with the number of the iteration that they occur in to distinguish between the iterations. Even though the full bias $B$ is never constructed and never kept in memory all at once in \textsc{GrowAcq}, we will still refer to it in this proof and denote it with $B$, i.e., $B = \{c \mid rel(c) \in \Gamma \land var(c) \subseteq X\}$. When we instead write $B_i$, we refer to the part of the bias that is constructed and used in iteration $i$ (line 6 of Algorithm~\ref{alg:growacq}), which is $B[Y_i] \setminus B[Y_{i-1}]$ (Lemma~\ref{lemma:bias}).

{\em Soundness}. 
\textsc{GrowAcq}  adds constraints to $C_L$ only at line 7 of \Cref{alg:growacq}. At that line, only constraints returned from the inner interactive CA algorithm are added to $C_L$. Since the assumption is that a sound algorithm is used in the {\em Acq} function, \textsc{GrowAcq} is sound.

{\em Completeness}.
We prove that \textsc{GrowAcq} is complete by proving by induction that, after each iteration $i$, $C_L$ is equivalent to $C_T[Y_i]$, meaning that after the last iteration, $C_L$ is equivalent to $C_T[X]$.
\textsc{GrowAcq} starts with $Y_1 = \emptyset$, so both $C_T[Y_1]$ and $B_1$ are empty. The first iteration where the algorithm has to actually learn any constraints will be the one where $Y$ grows large enough so that $C_T[Y] \neq \emptyset$. Assume that this happens at iteration $k$. In this case, $C_T[Y_k]$ will be representable by $B_k$, because $B_k = B[Y_k] \setminus B[Y_{k-1}]$ and we know that  $C_T[Y_{k-1}] = \emptyset$. Since $C_T[Y_k]$ is representable by $B_k$, it will be successfully learned in line 7, as long as a complete interactive CA algorithm is used.

Assuming now that $C_L = C_T[Y_{n}]$ holds at the end of the $n$-th iteration, let us now prove that $C_L = C_T[Y_{n+1}]$ will hold at the end of the $n\!+\!1$-th iteration. From the assumption that $C_L = C_T[Y_{n}]$, it follows that $(B[Y_n] \setminus C_L) \cap C_T = \emptyset$. As a result, $B_{n+1}$, being equal to $B[Y_{n+1}] \setminus B[Y_n]$ does not exclude any constraint from $C_T[Y_{n+1}]$ that has not already been learned. From this, it follows that $(C_T[Y_{n+1}] \setminus C_L) \subseteq B_{n+1}$, and thus this set of constraints will be learned in line 7 as long as a complete interactive CA algorithm is used. 
Hence, \textsc{GrowAcq} is complete.
%\dt{Should I add termination? It is too straightforward}
\end{proof}

\section{Guided query generation}
\label{sec:guided}

We now turn our attention to the objective function used %(optionally) 
at line 9 of \Cref{alg:pqgen}. Since when \textsc{GrowAcq} is used, the size of $B$ used in every iteration is reduced, query generation is now often fast, leaving sufficient room for using optimization to find a \textit{good} query.
%In this section, we show how we can significantly improve the performance of CA systems by guiding query generation using information from the set of constraints that have already been learned. 

%As also mentioned in the literature~\cite{tsouros2019structure,tsouros2020efficient}, one of the main weaknesses of CA is that it does not yet exploit information from the learned constraints in $C_L$ in order to better focus the acquisition process. Only in ~\cite{tsouros2019structure}, the acquisition system tries to exploit the structure of the learned constraints by focusing the queries it asks to the user to detected quasi-cliques in the learned constraint network.

%Query generation -- which is one of the most important parts of CA systems -- currently only exploits information from the constraints that are left in $B$, to generate more ``informative'' queries. In other words, to generate queries in such a way that not many queries are needed to reach convergence, it only considers the bias. 
The objective function used in existing query generation systems~\cite{bessiere2023learning,tsouros2020efficient} tries to maximize the number of constraints from $B$ that are violated by the generated query $e$. The motivation is that this can potentially help shrink the bias faster. The objective function is
%\tias{Yet again a notation change, we now have 3 ways to write that a constraint is violated. I don't like this one because it looks special, while what we are doing is reifying the constraint... why not just $\sum \llbracket \neg c \rrbracket$? }:
$$e = \argmax_e \sum_{c \in B} \llbracket e \not\in sol(\{c\}) \rrbracket$$ 
where $ \llbracket \cdot \rrbracket$ is the Iverson bracket which converts \textit{True}/\textit{False} into 1/0.

However, looking only at the number of violated constraints in $B$ does not fully capture what a good query is:
\begin{itemize}
    \item We want queries that lead to a positive answer to violate many constraints from the bias $B$, as these can then all be removed from $B$, shrinking it faster.
    
    \item On the other hand, we want queries that lead to a negative answer to violate a small number of constraints from $B$, as it allows the CA system to find the conflicting constraint faster.
\end{itemize}
 
Based on this, in order to generate good queries regardless of the user's answer, we want query generation to minimize the violation of constraints that are in the unknown target set $C_T$, seeking a query to which the user's answer will be ``yes''. At the same time, we want to maximize the violation of constraints in $B$ that are not in $C_T$, so that positive answers can shrink the bias faster (the first bullet point above). 
Note that we also have the constraint ensuring that at least one constraint from $B$ has to be violated. This means that when $B\setminus C_T = \emptyset$, we want a minimum number of constraints in $C_T$ that we have not already learned to be violated. This leads to negative queries that violate a small number of constraints in $B$ (the second bullet point above).

Assume we have access to an oracle ${\cal O}$ that tells us whether a constraint $c$ belongs to the unknown target set or not: ${\cal O}(c) = (c \in C_T)$. Using this oracle we can formulate an objective function for query generation, using the reasoning above, as follows:

%$$
%e^\star = \argmax_e \sum_{c \in B} \llbracket e \not\in sol(\{c\}) \rrbracket \cdot (1 - |\Gamma| \cdot \llbracket {\cal O}(c) \rrbracket),
%$$

$$
\sum_{c \in B} \llbracket e \not\in sol(\{c\}) \rrbracket \cdot (1 - |\Gamma| \cdot \llbracket {\cal O}(c) \rrbracket),
$$

On the one hand, every time that the oracle returns \textit{False} for a constraint from the bias that is violated by $e$,  the objective function is increased by 1, thereby maximizing the violation of these constraints.
Conversely, for constraints where ${\cal O}$ returns \textit{True}, we aim to minimize the violations, which requires a reduction in the objective value for each such violated constraint.
However, it is possible that violating a set of constraints $C$ (where $\forall c_i \in C\,|\,{\cal O}(c_i) =$ \textit{False}) may imply the violation of a constraint $c_j$ with ${\cal O}(c_j) =$ \textit{True}.
In such cases, if the reduction in the objective value for violating $c_j$ is not large enough, the system will violate both $C$ and $c_j$, maximizing the objective.
To address this issue, we introduce a ``penalty'' of $|\Gamma|$, which is equal to the upper bound of the number of constraints in each scope.
This ensures that the system prioritizes satisfying a constraint with  ${\cal O}(c_j) =$ \textit{True}, over violating other constraints from $B$.

\textbf{Modeling the oracle}
Observe how the current objective of maximizing violations corresponds to using a model of the oracle $M$ that always answers \textit{False}, i.e., that assumes that none of the candidate constraints belong to $C_T$.
On the other hand, if we used an oracle $M$ that always answers \textit{True}, then the query generation would try to violate as few constraints as possible. However, the $\bigvee_{c_i \in B[Y]} \neg c_i$ constraint would still need to be satisfied, in the extreme case leading every query to violate exactly one constraint from $B$. 
Based on this observation, we propose to model the oracle using the following model $M$, which tries to determine for every constraint $c$ whether violating or satisfying $c$ would lead to the least amount of queries later on in the algorithm.

$$
M(c) = \big( \frac{1}{P[c \in C_T]} \leq log(|Y|) \big)
$$

On the one hand, in the extreme case, the constraints for which $M(c)$ answers \textit{True} will be violated one by one in the later queries (once most of the constraints for which $M(c)$ answers \textit{False} have been dealt with). Let $P[c \in C_T]$ be a probabilistic estimate of whether $c$ is part of $C_T$. Then, if the generated queries would violate the constraints with that probability one by one, we would in expectation need $1/P[c \in C_T]$ queries to find a constraint from $C_T$. For example, for a set of constraints that each has a probability of $25\%$, 1 in every 4 queries is expected to lead to a $c \in C_T$ being learned.

On the other hand, for each constraint $c \in C_T$ for which $M(c)$ answers \textit{False}, a scope-finding procedure is needed to locate the violated constraint. The most efficient functions commonly used to do it (i.e., FindScope~\cite{bessiere2023learning} or FindScope-2~\cite{tsouros2020efficient}) have been shown to require $\Theta(log(|Y|))$ queries to find a violated constraint $c \in C_T$ in the worst case, where $Y$ is the number of variables considered in query generation. As a result, we estimate the number of queries needed in this case as $k \cdot log(|Y|)$, with $k$ a constant. We found $k = 1$ to work well in practice.

%Based on the above, our proposed model of the oracle will decide for each constraint $c$ if the satisfaction or the violation of $c$ will lead to the lowest amount of expected queries later on, based on the probabilistic estimate $P[c \in C_T]$. \senne{Rephrase a bit and make sure probability at the end for transition}

\textbf{Probability estimation} To compute the probability $P(c \in C_T)$ of a constraint $c \in B$, we use a simple approach, considering only information from the relations $rel(c)$ of the constraints.  More specifically, to compute $P(c \in C_T)$, we count the number of times a constraint with relation $rel(c)$ has been added to $C_L$, and divide it by the total number of times that such a constraint has been removed from $B$.
%\senne{The current formulation earlier on in this paper suggests that when we add a constraint $c$ to $C_L$, we \textit{don't} remove it from the bias. \tias{If you think that, then our writing is problematic. Findscope, FindC, \textsc{GrowAcq}, they all remove things from B on positive queries (\textsc{GrowAcq} regenerates B without regenerating stuff already visited} This can make the estimate $P(c \in C_T)$ a bit messy. For example, if we have added 4 constraints with relation $rel$ to $C_L$, and have removed 1 such constraint from $B$, then $P(c \in C_T) = 4$. Furthermore, what happens when we have added 4 such constraints to $C_L$ and remove 0 from $B$? Is $P(c \in C_T) = \infty$ then?}  \dt{Changed it a bit, check whether it looks good}
Much more advanced estimation techniques, including machine learning methods, can be used for more accurate estimation. We leave this for future work.
\section{Experimental evaluation}
\label{sec:exp}

In this section, we empirically answer the following research questions:

\begin{itemize}

    \item [(Q1)] Does using \textsc{PQ-Gen} with conventional solvers avoid premature convergence, and how do CA systems perform when they use it?
    
    \item [(Q2)] Does \textsc{GrowAcq} (using \textsc{MQuAcq-2}) perform better than using \textsc{MQuAcq-2} directly?

    \item [(Q3)] How does our probability-guided query generation objective function perform compared to the one used in current CA systems?

    \item [(Q4)] How does the combination of our methods perform? % compared to existing systems? T: we only compare with mquacq2, have to be careful with 'systems' here.

    \item [(Q5)] How do our methods perform on problems with a huge bias $B$?
    
\end{itemize}

\subsection{Benchmarks}
\label{sec:bench}

We used the following benchmarks:

\textbf{Jigsaw Sudoku.}
The Jigsaw Sudoku is a variant of Sudoku in which the $3 \times 3$ boxes are replaced by irregular shapes. %We used the instance that is displayed in Figure \ref{fig:jigsaw}, 
It consists of 81 variables with domains of size 9. The target network consists of 811 binary $\neq$ constraints, on rows, columns, and shapes. 
The bias $B$ was constructed using the language $\Gamma = \{\geq, \leq, <,>,\neq, = \}$ and contains $19\,440$ binary constraints.

\hide{
\begin{figure}[tb]
	\begin{center}
		\includegraphics[width=100pt]{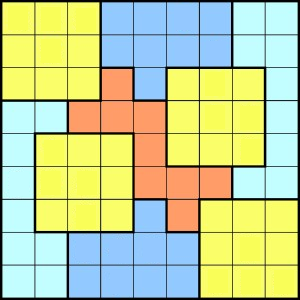}
	\end{center}
	
	\caption{The instance of Jigsaw Sudoku used.} \label{fig:jigsaw}
\end{figure}
}

\textbf{Murder.}
The Murder puzzle problem consists of 20 variables with domains of size 5. The target network contains 4 cliques of 10 $\neq$ constraints and 12 additional binary constraints. The bias was initialized with 760 constraints based on the language $\Gamma = \{\geq, \leq, <,>,\neq, = \}$.

\textbf{Random.}
We used a problem with 100 variables and domains of size 5. We generated a random target network with 495 $\neq$ constraints. The bias was initialized with $19\,800$ constraints, using the language $\Gamma = \{\geq, \leq, <,>,\neq, = \}$.

\textbf{Golomb rulers.}
The problem is to find a ruler where the distance between any two marks is different from that between any other two marks. We built a simplified version of a Golomb ruler with 8 marks, with the target network consisting only of quaternary constraints.\footnote{The ternary constraints derived when $i = k$ or $j = l$ in $|x_i - x_j| \neq |x_k - x_l|$ were excluded, as also done in the literature~\cite{tsouros2020efficient,tsouros2019structure}}
The bias, consisting of 238 binary and quaternary constraints, was created with the language $\Gamma = \{\geq, \leq, <,>,$ $\neq, =$$, |x_i - x_j| \neq |x_k - x_l| \}$.

\textbf{Job-shop scheduling.}
The job-shop scheduling problem involves scheduling a number of jobs, consisting of several tasks, across a number of machines, over a certain time horizon. %Each task can only be run on a specific machine. 
The decision variables are the start and end times of each task.
%Each task can only be run on a specific machine, so the task-machine assignment is implied and not reflected in the decision variables or the constraints.
There is a total order over each job's tasks, expressed by binary precedence constraints. %These constraints express that a task can only start once all of the tasks that must precede it have ended. 
There are also constraints capturing the duration of the tasks and that tasks should not overlap on the same machine.
%Each task has also has a duration. There are duration constraints that capture that a task's end time must equal its start time plus its duration. Finally, there are non-overlap constraints that capture that two tasks that are run on the same machine cannot overlap in time. 
The language $\Gamma = \{\geq, \leq,$ \mbox{$<,$} $>,\neq, =, x_i + c = x_k \}$ was used, with $c$ being a constant from 0 up to the maximal duration of the jobs. We used a problem instance containing 10 jobs, 3 machines (i.e., $|X| = 60$) and a time horizon of 15 steps, leading to a bias containing $14\,160$ constraints.
%\senne{The job-shop scheduling problem is now introduced with much more detail than the other problems. Perhaps we can drop the description that starts with `Each task can only ...'.}

\subsection{Experimental setup}

Let us now give some details about the experimental settings:

\begin{itemize}

\item
All the experiments were conducted on a system carrying an Intel(R) Core(TM) i9-11900H, 2.50 GHz clock speed, with 16 GB of RAM.

\item 
We measure the total number of queries $\#q$, the average time of the query generation process $\bar{T}_{gen}$ 
(line 3 of \Cref{alg:general}), the average waiting time $\bar{T}$ per query for the user, and the total time needed (to converge) $T_{total}$. All times are presented in seconds.
The difference between $\bar{T}_{gen}$ and $\bar{T}$ is that the latter takes into account also the queries posed on lines 8-9 of \Cref{alg:general}, which are very fast to compute.

\item We evaluate our methods in comparison with the state-of-the-art method \textsc{MQuAcq-2}~\cite{tsouros2019structure}.

\item All methods and benchmarks were implemented in Python~\footnote{Our code is available online in: https://github.com/Dimosts/ActiveConLearn} using the CPMpy constraint programming and modeling library~\cite{guns2019increasing}, except for the experiments using custom solvers.\footnote{For the custom solver based query generators from~\cite{bessiere2023learning,tsouros2020efficient}, we obtained the implementations (in C++) through personal communication with the authors.}

\item The results presented in each benchmark, for each algorithm, are the means of 10 runs.

\end{itemize}  

We now discuss the results of our experimental evaluation, based on the questions we posed at the beginning of the section. 
%\tias{I think the below is unneeded repetition}
%First, we evaluate \textsc{PQ-Gen}, comparing it with existing query generators. %, and showing that it converges in all cases. 
%We then evaluate \textsc{GrowAcq} and our proposed approach for guiding query generation, showing that they outperform state-of-the-art systems in most cases. Finally, we evaluate our combined methods in problems where the bias $B$ is up to 50 times larger than the ones commonly used in the literature.

%\dt{In the following subsections: I need to emphasize more on the !why!, following the !why, what, how! reasoning, and not jump directly to that !what! we did before I explain the !why!.}
%\dt{+ make it easier for the reader, making it clear what we refer to, e.g. when we refer to PQ-Gen make it clear that is ours, or when referring to the proposed objective function make it clear what we mean (i.e., for guidance)}
%\dt{Q4: not just "our approach" but the combination of our methods, or even combination of \textsc{GrowAcq} and guided PQ-Gen}
%\dt{Q5: include both growacq and growacq + guiding, and have a sentence for guiding not having an impact}

\subsection{[Q1] Performance of \textsc{PQ-Gen}}
\label{exp:q1}

Both \textsc{PQ-Gen}, our projection-based query generation approach, and \textsc{TQ-Gen}~\cite{addi2018time} (discussed in Section~\ref{sect:problems_conventional_solvers}) involve hyperparameters that affect their performance. Thus, we first performed a hyperparameter sensitivity analysis to assess their performance under different configurations. In tandem with \textsc{TQ-Gen}, we also used the adjust function described in~\cite{addi2018time}. %We compared the two methods in terms of the convergence rate, number of queries, and time performance on the different configurations. 
We used the JSudoku benchmark for this comparison. For \textsc{TQ-Gen}, we fixed the hyperparameter $\alpha$ to 0.8 as recommended in~\cite{addi2018time}, and used $\tau = \{0.05, 0.1, 0.2, 0.3\}$ and $t = \{0.5, 1, 1.5, 2\}$. For \textsc{PQ-Gen} hyperparameters, we used $l = \{3000, 5000, 7500, 10000\}$ and $t = \{0.5, 1, 1.5, 2\}$. Thus, we examined 16 different configurations for each. A summary of the results are shown in \Cref{ta:tqgen}.\footnote{More details regarding this experiment can be found in \Cref{sec:hyperparameter-exp}.}

\begin{table}[h]
\centering
\caption{A summary of the performance of \textsc{TQ-Gen} and \textsc{PQ-Gen} with different configurations}
\label{ta:tqgen}
{
%\begin{tabular}{ |p{3cm}|p{3cm}|p{3cm}|  }
\begin{tabular}{ lrrrrrr  }

Problem & \multicolumn{1}{c}{$Conv$} & \multicolumn{1}{c}{$\#q$} & \multicolumn{1}{c}{$T_{max}$} & \multicolumn{1}{c}{$T_{total}$} \\
\hline
%\\  [-0.9em]
MQuAcq-2 with \textsc{TQ-Gen~\cite{addi2018time}} & 32\% & 7\,555 & 20.66 & 2\,371.40\\
MQuAcq-2 with \textsc{PQ-Gen} (ours) & 100\% & 6\,551 & 4.42 & 728.25 \\

\end{tabular}
}
\end{table}

Confirming our analysis, with our \textsc{PQ-Gen} there is never a case of premature convergence, no matter what hyperparameters are used. On the other hand, when \textsc{TQ-Gen} is used, the system fails to converge in the majority of cases, and specific hyperparameter values have to be chosen to ensure convergence. 
In addition, our \textsc{PQ-Gen} shows much better performance both in terms of the number of queries needed and, especially, in terms of runtime. 

%Having shown that \textsc{PQ-Gen} tackles the issue of premature convergence and outperforms \textsc{TQ-Gen} with all different configurations in JSudoku, we then proceeded in a more detailed evaluation of our approach to asses its performance in comparison with popular query generation methods.
In more detail, we compared our projection-based query generation (\textsc{PQ-Gen}) with a baseline where we run a conventional CP solver to directly solve the query generation problem, using a one-hour time limit, as well as with query generation methods from the literature, i.e., \textsc{TQ-Gen} and the custom solver based query generators from~\cite{bessiere2023learning,tsouros2019structure}. For \textsc{PQ-Gen} and \textsc{TQ-Gen} we used the best configuration found in the previous experiment. That is, we run \textsc{PQ-Gen} with $l = 5000$ and $t = 1$ and \textsc{TQ-Gen} with $\tau = 0.2$ and $t = 2$. %We evaluated both within the \textsc{MQuAcq-2} algorithm, and 
We used benchmarks that are similar to the ones used in~\cite{bessiere2023learning,tsouros2019structure}. For consistency, we used the same state-of-the-art query generation objective function across all methods that accept one, i.e., our \textsc{PQ-Gen} and the custom solvers, which tries to maximize the number of violated constraints from $B$. The results are shown in \Cref{ta:qgen}. 

\begin{table}[h]
\centering
\caption{Comparing \textsc{PQ-Gen} with state-of-the-art query generators} 
\label{ta:qgen}

{
%\begin{tabular}{ |p{3cm}|p{3cm}|p{3cm}|  }
\begin{tabular}{ llrrrrrr }

Method & Problem & \multicolumn{1}{c}{$\#q$} & \multicolumn{1}{c}{$\bar{T}_{gen}$} & \multicolumn{1}{c}{$\bar{T}$} & \multicolumn{1}{c}{$T_{max}$} & \multicolumn{1}{c}{$T_{total}$} & \multicolumn{1}{c}{$Convergence$} \\
\hline
\multicolumn{8}{c}{Using conventional solvers}\\
\hline
& JSudoku & 6\,337 & 2.05 & 0.21 & 11.67 & - & 0\%	\\
& Murder & 347 & 0.09 & 0.01 & 0.31 & 4.69 & 100\%	\\
\multirow{-3}{*}{\textsc{MQuAcq-2} $_{\textsc{baseline}}$} & Random &5\,694 & 2.90 & 0.05 & 20.78 & 294.94 & 100\%	\\
\hline
& JSudoku & 7\,153 & 0.26 & 0.13 & 8.46 & 919.34 & 100\% \\
& Murder & 394 & 0.04 & 0.01 & 0.32 & 5.22 & 100\% \\
\multirow{-3}{*}{\textsc{MQuAcq-2} $_{\textsc{TQ-Gen~\cite{addi2018time}}}$} & Random & 5787 & 5.26 & 1.23 & 17.61 & 7100.44
& 100\% \\
\hline
\\ [-0.8em]
& JSudoku & 6\,458 & 0.77 & 0.10 & 3.19 & 666.91 & 100\% \\
& Murder & 370 & 0.66 & 0.03 & 1.10 & 12.28 & 100\% \\
\multirow{-3}{*}{\textsc{MQuAcq-2} $_{\textsc{PQ-Gen (ours)}}$} & Random & 5\,708 & 0.60 & 0.04 & 2.22 & 233.62 & 100\% \\
\hline
\multicolumn{8}{c}{Using custom solvers}\\
\hline
& JSudoku & 5\,321 & 0.99 & 0.06 & 2.04 & 336.42 & 100\% \\
& Murder & 421 & 0.76 & 0.13 & 1.02 & 53.21 & 100\% \\
\multirow{-3}{*}{\textsc{MQuAcq-2} $_{\text{GenerateQuery.cutoff~\cite{bessiere2023learning}}}$} & Random & 5\,349 & 0.94 & 0.04 & 3.24 & 198.65 & 100\% \\
\hline
& JSudoku & 5\,042 & 1.00 & 0.06 & 2.34 & 277.36 & 100\% \\
& Murder & 325 & 0.86 & 0.04 & 1.01 & 12.90 & 100\% \\
\multirow{-3}{*}{\textsc{MQuAcq-2} $_{\text{max$_B$~\cite{tsouros2020efficient}}}$} & Random & 5\,012 & 0.94 & 0.02 & 1.52 & 95.21 & 100\% \\
\hline
\end{tabular}
}
\end{table}

We can observe that convergence was reached in all cases, except for the baseline in which a conventional solver was used directly using a time limit. Our method, \textsc{PQ-Gen}, and the baseline show similar performance in terms of the number of queries needed. while being much better than \textsc{TQ-Gen} in JSudoku and Random, where $B$ is larger, especially when considering time performance.

On the other hand, when custom solvers are used, we can see that the time performance has improved and the number of queries has decreased. This happens because the custom solver can return a partial assignment of any size, trying only to maximize the value of the objective function used, and utilizing heuristics from the literature, while when our \textsc{PQ-Gen} is used, the query generated has to be a solution in a specific (sub)set of variables, which takes more time to compute. %, and uses no specialized heuristic.
As a result, we observed that custom solvers often return queries that violate more constraints from $B$, which helps \textsc{MQuAcq-2} shrink the bias faster in terms of the number of queries needed. 

%\dt{I dont like that much, but will check again tomorrow, too late now to overthink}

\subsection{[Q2 - Q4] Evaluating \textsc{GrowAcq} and guided query generation}
\label{exp:growacq}\label{exp:guided}

Hereafter, we continue our experiments using \textsc{PQ-Gen}, as the motivation was to investigate techniques that work with any solver. As using PQ-Gen allows us to use conventional solvers, able to run on any given benchmark, in contrast to the custom solvers from~\cite{bessiere2023learning,tsouros2019structure}, where specific constraint relations are implemented, from now on we will use all of the benchmarks mentioned in \Cref{sec:bench}.

%, as the rest of the methods compared are our implementation. 

%\tias{You suddenly add two more benchmarks but do not mention why they were not in the previous one, nor why they are in the current one. This will upset reviewers}
%\dt{Added something in beggining of the section, as we discussed. }

\textbf{[Q2] Using \textsc{GrowAcq} within \textsc{MQuAcq-2}}
We now evaluate the performance of \textsc{GrowAcq}, our proposed bottom-up CA approach. To evaluate it, we used \textsc{MQuAcq-2}, as the inner CA algorithm within \textsc{GrowAcq} (line 7 of \Cref{alg:growacq}) and compared this to using \textsc{MQuAcq-2}, directly on the full-sized problem. \Cref{ta:growacq}, top two blocks, presents the results.

We can observe that the usage of \textsc{GrowAcq} results in a reduction of the number of queries in JSudoku, Murder, and Random, while a slight increase can be seen in Golomb. In Job-shop, the increase in the number of queries is somewhat larger ($25\%$). This is the case because the target constraint network in this benchmark is sparse, with most of the iterations of \textsc{GrowAcq} in a $Y \subset X $ not learning any constraint from $C_T$ and only shrinking the bias. So, when the full-sized problem is looked at directly when \textsc{MQuAcq-2} is used, the bias $B$ can shrink with fewer queries. On the other hand, when the target network is not sparse, there is a decrease in the number of queries of up to $19\%$, due to the fact that the system can locate the scopes of the constraints faster, starting from a $Y \subset X$ every time. Based on the above observations, we can see that using \textsc{GrowAcq} leads to learning constraints in a lower amount of queries, but on the other hand, needs more queries to shrink the bias.

Finally, although the total time is almost the same in most problems, and slightly increased in JSudoku and Golomb, the average time per query has not noticeably increased, while the maximum time the user has to wait between two queries has decreased significantly (up to $88\%$ in the Job-shop benchmark), due to the overall reduction in the time needed in query generation in almost all problems (as indicated by the $\bar{T}_{qgen}$ column). As the (maximum) waiting time for the user is of paramount importance for interactive settings, we can see that \textsc{GrowAcq} improves this aspect of time performance of interactive CA systems.

\hide{  % hidden -------------------------------------------------
\begin{table}
\centering

\caption{Evaluation of \textsc{GrowAcq} and the proposed approach for guiding query generation}

\label{ta:growacq}
%\resizebox{0.6\columnwidth}{!}{%

%\begin{tabular}{ |p{3cm}|p{3cm}|p{3cm}|  }
\begin{tabular}{ l|r|rrrrr  }
Problem & \multicolumn{1}{c|}{$\#q$} & \multicolumn{1}{c}{$\bar{T}_{gen}$} & \multicolumn{1}{c}{$\bar{T}$} & \multicolumn{1}{c}{$T_{max}$} & \multicolumn{1}{c}{$T_{total}$} \\
\hline
\multicolumn{6}{c}{MQuAcq-2, \textsc{PQ-Gen}}\\
\hline
JSudoku & 6\,458 & 0.77 & 0.10 & 3.19 & 666.91\\
Murder & 370 & 0.66 & 0.03 & 1.10 & 12.28 \\
Random & 5\,708 & 0.60 & 0.04 & 2.22 & 233.62 \\
Golomb & 233 & 0.96 & 0.22 & 1.22 & 50.89 \\
Job-shop & 590 & 1.03 & 0.10 & 5.36 & 56.23	\\
\hline
\multicolumn{6}{c}{\textsc{GrowAcq} + \textsc{MQuAcq-2}, \textsc{PQ-Gen}}\\
\hline
Sudoku & 5\,863 & 0.15 & 0.12 & 1.98 & 721.15	\\
Murder & 357 & 0.04 & 0.02 & 0.11 & 7.97	\\
Random & 4\,804 & 0.14 & 0.05 & 1.30 & 230.36 \\
Golomb & 270 & 0.80 & 0.29 & 1.30 & 78.47 \\
Job-shop & 786 & 0.13 & 0.06 & 0.66 & 48.18	\\
\hline
\multicolumn{6}{c}{\textsc{GrowAcq} + \textsc{MQuAcq-2} $_{guided}$, \textsc{PQ-Gen}}\\
\hline
JSudoku & 3\,963 & 0.15 & 0.24 & 1.96 & 963.42	\\
Murder & 250 & 0.04 & 0.04 & 0.27 & 9.07	\\
Random & 4\,820 & 0.14 & 0.05 & 1.19 & 229.47	\\
Golomb & 100 & 0.16 & 0.27 & 0.95 & 27.44 \\
Job-shop & 776 & 0.13 & 0.06 & 0.64 & 47.53	\\
\hline
\end{tabular}

%}
\end{table}
}   % hidden -------------------------------------------------

\begin{table}
\centering

\caption{Evaluation of \textsc{GrowAcq} and the proposed approach for guiding query generation}
%\dt{Reminder: still have to see what to do with this table}

\label{ta:growacq}
%\resizebox{0.6\columnwidth}{!}{%

%\begin{tabular}{ |p{3cm}|p{3cm}|p{3cm}|  }
\begin{tabular}{ l|r|rrrr  }
Problem & \multicolumn{1}{c|}{$\#q$} & \multicolumn{1}{c}{$\bar{T}_{gen}$} & \multicolumn{1}{c}{$\bar{T}$} & \multicolumn{1}{c}{$T_{max}$} & \multicolumn{1}{c}{$T_{total}$} \\
\hline
\multicolumn{6}{c}{\textsc{MQuAcq-2} }\\
\hline
JSudoku & 6\,458 & 0.77 & 0.10 & 3.19 & 666.91\\
Murder & 370 & 0.66 & 0.03 & 1.10 & 12.28 \\
Random & 5\,708 & 0.60 & 0.04 & 2.22 & 233.62 \\
Golomb & 233 & 0.96 & 0.22 & 1.22 & 50.89 \\
Job-shop & 590 & 1.03 & 0.10 & 5.36 & 56.23	\\
\hline
\multicolumn{6}{c}{\textsc{GrowAcq} + \textsc{MQuAcq-2}}\\
\hline
Sudoku & 5\,863 & 0.15 & 0.12 & 1.98 & 721.15	\\
Murder & 357 & 0.04 & 0.02 & 0.11 & 7.97	\\
Random & 4\,804 & 0.14 & 0.05 & 1.30 & 230.36 \\
Golomb & 270 & 0.80 & 0.29 & 1.30 & 78.47 \\
Job-shop & 786 & 0.13 & 0.06 & 0.66 & 48.18	\\
\hline
\multicolumn{6}{c}{\textsc{GrowAcq} + \textsc{MQuAcq-2} $_{guided}$}\\
\hline
JSudoku & 3\,963 & 0.15 & 0.24 & 1.96 & 963.42	\\
Murder & 250 & 0.04 & 0.04 & 0.27 & 9.07	\\
Random & 4\,820 & 0.14 & 0.05 & 1.19 & 229.47	\\
Golomb & 100 & 0.16 & 0.27 & 0.95 & 27.44 \\
Job-shop & 776 & 0.13 & 0.06 & 0.64 & 47.53	\\
\hline
\end{tabular}

%}
\end{table}

%\subsection{[Q3] Guided query generation}
%\label{exp:guided}

\textbf{[Q3] Guided query generation} In order to evaluate the performance of our proposed objective function for guiding query generation, we compare it with the use of the most popular objective function used in state-of-the-art CA systems, i.e., maximizing violations of constraints from $B$. The objective functions are utilized in line 9 of \Cref{alg:pqgen}. For this comparison, \textsc{GrowAcq} is used, again with \textsc{MQuAcq-2} as the inner acquisition algorithm at line 7 of \Cref{alg:growacq}.
The results using the guided query generation can be seen in~\Cref{ta:growacq}, bottom-two blocks, comparing \textsc{GrowAcq} + 
 \textsc{MQuAcq-2} against \textsc{GrowAcq} + \textsc{MQuAcq-2} $_{guided}$.

We can see that, when using our probability-based guidance for query generation, the number of queries has significantly decreased in JSudoku, Murder, and Golomb, while it has remained nearly the same in Random and Job-Shop. In the latter cases, the number of queries has not decreased because these are under-constrained problems, and thus the probability derived from the constraints' relations was small. This led to maximizing the violations of all constraints in $B$ (i.e., the same behavior as with the existing objective).
On the other hand, in the problems that do not have a sparse constraint network, 
where using the simple counting method to compute the probabilities of the constraints could effectively guide the acquisition system, the decrease observed in the number of queries is substantial (32\% in JSudoku, 30\% in Murder, and 64\% in Golomb). However, as violating constraints one-by-one leads to more queries {\em generated} at line 3 of \Cref{alg:general}, yet fewer queries at lines 8-9, which are very fast to compute, there is a small increase in the total time on JSudoku.

%\subsection{[Q4] Combination of our methods}

\textbf{[Q4] Combination of our methods} Comparing the combination of our methods (i.e., \textsc{GrowAcq} + \textsc{MQuAcq-2} $_{guided}$) with \textsc{MQuAcq-2} (\Cref{ta:growacq}), we can see that combining our bottom-up approach with guiding the query generation greatly outperforms \textsc{MQuAcq-2} in terms of the number of queries needed to achieve convergence on most of the benchmarks. 
The number of queries has decreased on all benchmarks except Job-shop, where, because of its sparse target network, we need $23\%$ more queries, as \textsc{GrowAcq} increases the number of queries to converge in underconstrained problems,  due to the reasons described in section \ref{exp:growacq}, while guiding the query generation does not improve it, as the probabilities estimated are always low. In the rest of the problems, we observe a total decrease of $16\%$ in Random, $39\%$ in JSudoku, $32\%$ in Murder, and up to $60\%$ in Golomb.

These results demonstrate the effectiveness of the proposed methods in reducing the number of queries needed for CA algorithms, which is crucial in interactive scenarios.

\hide{
\begin{table}
\centering
\caption{Experimental results with guided query generation}
\label{ta:guidedqgen}
{
%\begin{tabular}{ |p{3cm}|p{3cm}|p{3cm}|  }
\begin{tabular}{ lrrrrrr  }

Problem & \multicolumn{1}{c}{$|C_L|$} & \multicolumn{1}{c}{$\#q$} & \multicolumn{1}{c}{$\bar{T}_{gen}$} & \multicolumn{1}{c}{$\bar{T}$} & \multicolumn{1}{c}{$T_{max}$} & \multicolumn{1}{c}{$T_{total}$} \\
%\hline
%\multicolumn{7}{|c|}{\textsc{GrowAcq} + \textsc{MQuAcq-2}}\\
%\hline
%JSudoku & 811 & 5,718 & 19 & 0.04 & 1.17 & 211.61 \\ 
%Murder & 52 & 356 & 6 & 0.02 & 0.11 & 7.26 \\ 
%Golomb & 70 & 256 & 5 & 0.15 & 0.98 & 39.32 \\ 
%Random & 495 & 4,827 & 24 & 0.04 & 0.70 & 172.84 \\ 
%Job-shop & 50 & 769 & 22 & 0.05 & 0.51 & 40.47 \\ 
\hline
\\ [-0.9em]
\multicolumn{7}{c}{\textsc{GrowAcq} + \textsc{MQuAcq-2}$_{guided}$}\\
\hline
\\ [-0.8em]
JSudoku & 811 & 3,971 & 0.15 & 0.12 & 1.23 & 484.76 \\
Murder & 52 & 264 & 0.04 & 0.03 & 0.22 & 8.22 \\
Random & 495 & 4,813 & 0.14 & 0.03 & 0.99 & 164.88 \\
Golomb & 70 & 100 & 0.16 & 0.27 & 0.95 & 27.44 \\
Job Shop & 50 & 772 & 0.12 & 0.05 & 0.54 & 39.73 \\
\hline

\end{tabular}
}
\end{table}
}

\subsection{[Q5] Dealing with larger biases }

To answer this question, we evaluated \textsc{GrowAcq} and the combination of our methods on larger instances of the Job-shop benchmark, using the same language as before. We used two instances: one with 15 jobs, 11 machines, and 40 steps (denoted as JS-15-11), which resulted in a bias consisting of $542\,850$ constraints, and one with 19 jobs, 12 machines, and again 40 steps (denoted as JS-19-12), resulting in a bias of $1\,037\,400$ constraints. The results are presented in \Cref{ta:largeb}.

%The performance of \textsc{GrowAcq} in problems with a large bias is better than when \textsc{MQuAcq-2} is used directly on the full set of variables, using the full bias. % at once.
On the one hand, \textsc{GrowAcq} needs more queries to converge (like on the smaller Job-Shop instance) because the constraint network of this problem is sparse. 
Yet the total time needed to converge is one order of magnitude lower than in \textsc{MQuAcq-2}, being $24.4$ times faster in the instance with a bias size of $0.5$ million constraints and $25.6$ times faster in the instance with $|B| > 1M$.
In addition, the maximum waiting time has drastically decreased by using \textsc{GrowAcq} (and the combination \textsc{GrowAcq} and guiding query generation), from $5\,499$ seconds to only $3$ (resp. $8$) seconds in JS-15-11 and from more than $20\,371$ seconds to only $7$ (resp. $6$) seconds in JS-19-12. Importantly, the average waiting time is more than $30$ times lower when using \textsc{GrowAcq}. Note that, as in the smaller job-shop instance, guiding does not lead to improvement in terms of the number of queries. % due to the reasons discussed in \Cref{exp:guided}. H
However, it does not noticeably worsen the time performance of the system. %, as it is very fast to compute.

Hence, the experiments confirm that the proposed methodology can efficiently handle significantly larger sets of candidate constraints than the state of the art, up to 50 times larger than the ones commonly used in the literature~\cite{bessiere2023learning, bessiere2013constraint,tsouros2019structure,mquacq}. 
%\tias{Is this \textsc{MQuAcq-2} baseline? E.g. in CPMpy? Should give it the same name as in Q1 then to avoid confusion with the author implementation of \textsc{MQuAcq-2}...}\dt{No I mean the ones used in the literature, in previous papers. I.e., in the literature the common size of the bias aws like 20\,000, and we used a bias of 1M. I added references to resolve the confusion}\dt{No it does not use the "baseline solver" for query generation, it used \textsc{PQ-Gen}.}

\begin{table}[h]
\centering
\caption{Experimental results on instances with a large bias }
\label{ta:largeb}
{
%\begin{tabular}{ |p{3cm}|p{3cm}|p{3cm}|  }
\begin{tabular}{ lrrrrrr  }

Problem & \multicolumn{1}{c}{$|B|$}  & \multicolumn{1}{c}{$\#q$} & \multicolumn{1}{c}{$\bar{T}_{qgen}$} & \multicolumn{1}{c}{$\bar{T}$} & \multicolumn{1}{c}{$T_{max}$} & \multicolumn{1}{c}{$T_{total}$} \\
\hline
%\\  [-0.9em]
\multicolumn{7}{c}{\textsc{MQuAcq-2}}\\
\hline
 %\\ [-0.8em]
JS-15-11 & $\approx$ 0.5M & 5\,456 & 66.12 & 6.25 & 5\,499.76 & 34\,085.73	\\
JS-19-12 & $\approx$ 1M & 8\,012 & 80.99 & 9.75 & 20\,371.74 & 78\,124.41	\\
\hline
\multicolumn{7}{c}{\textsc{GrowAcq} + \textsc{MQuAcq-2}}\\
\hline
JS-15-11 & $\approx$ 0.5M & 7\,015 & 0.44 & 0.20 & 2.84 & 1\,422.93	\\
JS-19-12 & $\approx$ 1M & 10\,309 & 0.62 & 0.29 & 6.92 & 2\,984.77	\\
\hline
\multicolumn{7}{c}{\textsc{GrowAcq} + \textsc{MQuAcq-2}$_{guided}$}\\
\hline
JS-15-11 & $\approx$ 0.5M & 7\,062 & 0.44 & 0.20 & 7.88 & 1\,399.63	\\
JS-19-12 & $\approx$ 1M & 10\,219 & 0.64 & 0.30 & 6.19 & 3\,054.68	\\
\hline

\end{tabular}
}
\end{table}

\section{Conclusions}
\label{sec:concl}

Some of the most important limitations of interactive CA methods are the large number of queries needed to converge, as well as the size of the candidate constraint set that they can handle efficiently. In this work, we presented novel methods to alleviate these
issues, improving the efficiency of CA systems. We proposed a bottom-up approach, which allows the system to handle significantly larger biases, reducing the maximum waiting time for the user, and also reducing the total number of queries needed when the target constraint network is not sparse. We also introduced a probabilistic method to guide query generation, further reducing the number of posted queries when our simple counting method could guide the acquisition system to learn constraints more efficiently. In addition, we presented a new query generation technique, named \textsc{PQ-Gen}, that allows the use of conventional CP solvers, removing the dependency of existing methods on customized solvers to converge. Our experimental evaluation showed that our proposed methods outperform state-of-the-art systems in terms of the number of queries in problems with non-sparse constraint networks, reducing this number up to 60\%. In addition, the experiments show that \textsc{GrowAcq} can handle up to 50 times larger biases than the ones commonly used in the literature, allowing CA to tackle increasingly large and complex problems.
The biggest avenue for future work is to further investigate additional ways to reduce the number of queries needed, e.g., by using guidance in all parts of the acquisition process (not just the query generation), and with more advanced probabilistic models. Another important avenue is to consider the setting in which user answers can be noisy %, where the user might not answer correctly, 
as has been investigated for passive systems.

\bibliography{paper}

\newpage

\appendix
\section{Hyperparameter evaluation for \textsc{PQ-Gen} and \textsc{TQ-Gen}} \label{sec:hyperparameter-exp}

\begin{figure}[h]

     \centering
     \begin{subfigure}[b]{0.49\textwidth}
        \captionsetup{justification=centering}
         \centering
         \includegraphics[width=\textwidth]{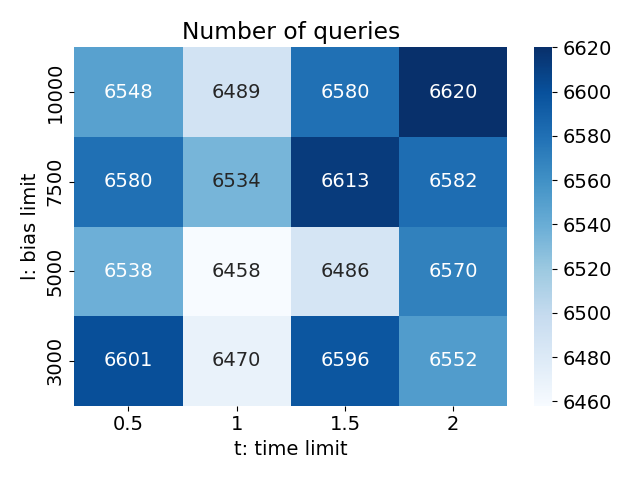}
         \caption{}
     \end{subfigure}
     \hfill
     \begin{subfigure}[b]{0.49\textwidth}
        \captionsetup{justification=centering}
         \centering
         \includegraphics[width=\textwidth]{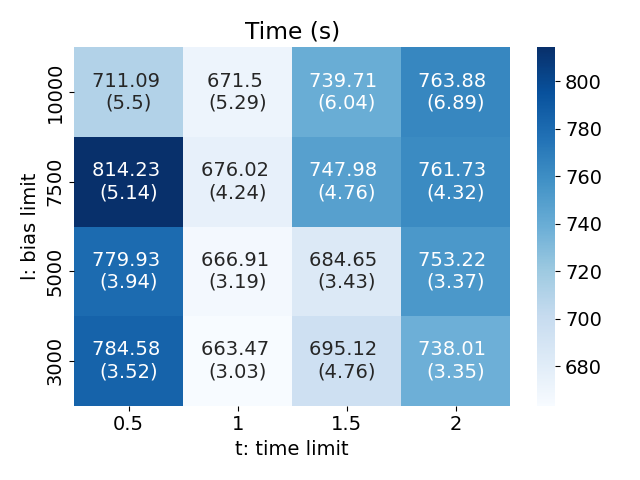}
         \caption{}
     \end{subfigure}
     
     \caption{Performance of \textsc{PQ-Gen} with different hyperparameters values, in terms of: a) the number of queries posted and b) the time (s) needed (in brackets we show the maximum waiting time for the user)}
    \label{fig:pqgen}
\end{figure}

\begin{figure}[h]

     \centering
     \begin{subfigure}[b]{0.49\textwidth}
        \captionsetup{justification=centering}
         \centering
         \includegraphics[width=\textwidth]{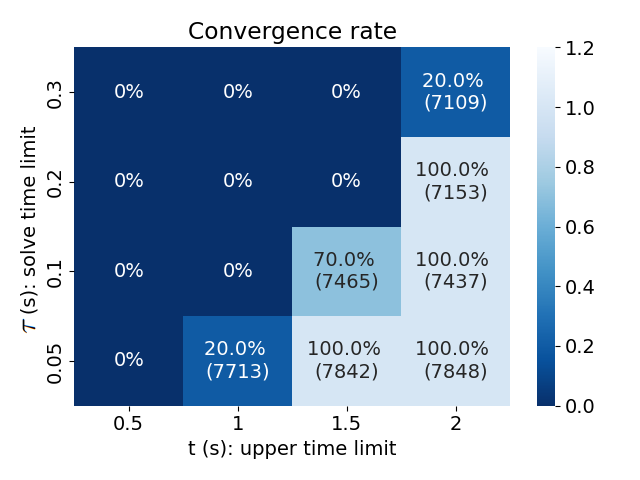}
         \caption{}
     \end{subfigure}
     \hfill
     \begin{subfigure}[b]{0.49\textwidth}
        \captionsetup{justification=centering}
         \centering
         \includegraphics[width=\textwidth]{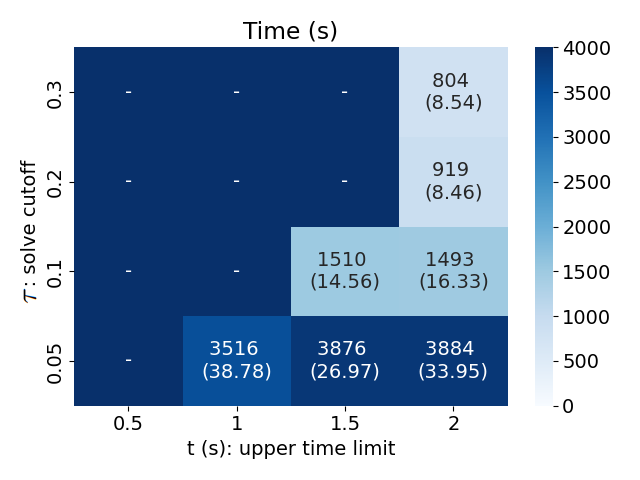}
         \caption{}
     \end{subfigure}
     
     \caption{Performance of \textsc{TQ-Gen} with different parameters, in terms of: a) the convergence rate (in brackets we show the number of queries posted when it converged) and b) the time (s) needed (in brackets we show the maximum waiting time for the user)}
    \label{fig:tqgen}
\end{figure}

Both \textsc{PQ-Gen}, our projection-based query generation approach, and \textsc{TQ-Gen}~\cite{addi2018time} (discussed in Section~\ref{sect:problems_conventional_solvers}) involve hyperparameters that affect their performance. As mentioned in \Cref{exp:q1}, we performed a sensitivity analysis of the performance with respect to the hyperparameter configuration used of \textsc{PQ-Gen} and \textsc{TQ-Gen}~\cite{addi2018time}. In this comparison, both query generation methods were used within the state-of-the-art active CA method \textsc{MQuAcq-2}. We used the JSudoku benchmark for this comparison, as from the benchmarks considered in this paper, this is shown to be the hardest one to reach convergence on (see Table~\ref{ta:qgen}).

In more detail, we varied the hyperparameters of both \textsc{PQ-Gen} and \textsc{TQ-Gen} to assess their performance under different configurations. %For our experiments, 
While we fixed the hyperparameter $\alpha$ of ~\cite{addi2018time} to 0.8 as recommended in~\cite{addi2018time}, we had to use different values for the time-related hyperparameters, $\tau$ and $t$, as the previous study used a different CP solver. To be more specific, while we used the CPMpy modeling language, compiling to the OR-Tools CP-SAT solver, the authors of~\cite{addi2018time} use Choco Solver. Although Choco solver is generally inferior to OR-Tools in harder-to-solve problems, OR-tools involves also the pre-solve process in the beginning, which uses most of the (limited) time to generate a query with \textsc{TQ-Gen}, and thus needs larger values for the time limits than when Choco is used as a solver.

In our evaluation we used $\tau = [ 0.05s, 0.1s, 0.2s, 0.3s ]$ and $t = [0.5s, 1s, 1.5s, 2s]$ for \textsc{TQ-Gen}. We also used the adjust function described in~\cite{addi2018time}, as it has been shown to improve its performance.
For \textsc{PQ-Gen} hyperparameters, we used $l = \{3000, 5000, 7500, 10000\}$ and $t = \{0.5, 1, 1.5, 2\}$. Thus, we examined 16 different configurations for each. The results of our experiments are presented in Figures~\ref{fig:pqgen} and \ref{fig:tqgen}, respectively, for \textsc{PQ-Gen} and \textsc{TQ-Gen}.

%\senne{It might be informative to the reader to specify this more. We can say that they used the Choco-solver, rather than CPMpy.}\dt{We don't want to use too much space, especially since already these experiments for PG-Gen are already quite larger than the rest, giving the impression of being "more important". If we go too specific, we should also discuss why, because the natural question of the reader would be that, as long as ortools is faster, this should not be a problem, but there are details that we should elaborate then, (like the time-consuming presolve process of ortools)}

Focusing on~\Cref{fig:pqgen}, %(a) shows the number of queries posed to the user and (b) shows the total time and the maximum waiting time for the user. 
we can see that the performance of \textsc{PQ-Gen} is stable across all configurations, both in terms of the number of queries and time performance, having also converged in all cases.  %The number of queries ranges from 6458 to 6620 while the total time of the acquisition process ranges from 663.47 to 814.23 seconds. 
%We observe similar behavior for the maximum waiting time for the user, which is shown in brackets in ~\Cref{fig:pqgen} (b). %The best values for the hyperparameters based on this are $l = 5000$ and $t = 1$. 
Let us now shift our focus to~\Cref{fig:tqgen} and the performance of \textsc{TQ-Gen}. %In (a) we report the convergence rate, and when the convergence rate is $\ge 0$ we also include the number of queries required to reach convergence in brackets, and in (b) we show the total time (along with the maximum waiting time in brackets) for the cases where convergence was achieved. 
The first observation is that in the majority of the cases, \textsc{MQuAcq-2} failed to converge when using \textsc{TQ-Gen} as the query generator. 
Only when the time limit was set to 2s, we see at least one run achieving convergence for all values of $\tau$.
In addition, the performance of \textsc{MQuAcq-2} using \textsc{TQ-Gen} is highly sensitive to changes in hyperparameter values, particularly with respect to time.

Overall, comparing the results of \textsc{PQ-Gen} and \textsc{TQ-Gen}, we observe that \textsc{PQ-Gen} exhibits superior performance in terms of convergence rate, fully overcoming the issue of premature convergence. \textsc{PQ-Gen} also requires a lower number of queries to reach convergence and offers improved time performance, resulting in reduced waiting times for the user.

\end{document}